%% file: main.tex
\documentclass{article}
\usepackage{iclr2026_conference,times}
\pdfoutput=1

\input{math_commands.tex}

\usepackage{xcolor}
\newcommand{\gray}[1]{\textcolor{gray}{#1}}
\usepackage[section]{placeins}
\usepackage[table]{xcolor}
\definecolor{paretoBlue}{RGB}{220,236,252}
\newcommand{\bestbg}[1]{\cellcolor{paretoBlue}{#1}}
\usepackage{booktabs,subcaption}
\usepackage{caption}
\usepackage{hyperref}
\usepackage{url}
\usepackage{multirow, booktabs}
\usepackage{wasysym}
\usepackage{graphicx}
\usepackage{amsmath,amsfonts,amssymb,amsthm,amsthm}
\usepackage{subcaption}
\usepackage[capitalize,noabbrev]{cleveref}
\usepackage{enumitem}      
\usepackage{array} 
\usepackage{wrapfig}
\usepackage{graphicx} 
\graphicspath{{figs/}}

\theoremstyle{plain}
\newtheorem{theorem}{Theorem}[section]

\theoremstyle{definition}
\newtheorem{definition}[theorem]{Definition}
\newtheorem{assumption}[theorem]{Assumption}

\theoremstyle{remark}

\usepackage{algorithm}
\usepackage{algorithmic}
\iclrfinalcopy

\title{Randomized Antipodal Search Done Right for Data Pareto Improvement of LLM Unlearning}

\author{
Ziwen Liu\textsuperscript{1}, 
Huawei Lin\textsuperscript{2}, 
Yide Ran\textsuperscript{3}, 
Denghui Zhang\textsuperscript{3}, 
Jianwen Xie\textsuperscript{4}, 
Chuan Li\textsuperscript{4}, \\
\textbf{Weijie Zhao}\textsuperscript{2}, 
\textbf{Zhaozhuo Xu}\textsuperscript{3}\thanks{Corresponding author: \texttt{zxu79@stevens.edu}} \\
\textsuperscript{1}Rice University, \textsuperscript{2}Rochester Institute of Technology, 
\textsuperscript{3}Stevens Institute of Technology, 
\textsuperscript{4}Lambda \\
\texttt{zl166@rice.edu,\{hl3352,wjzvcs\}@rit.edu.}\\
\texttt{\{yran1,dzhang42,zxu79\}@stevens.edu, \{jianwen.xie, c\}@lambdal.com}
}

\begin{document}

\maketitle

\begin{abstract}
Large language models (LLMs) sometimes memorize undesirable knowledge, which must be removed after deployment. Prior work on machine unlearning has focused largely on optimization methods that adjust parameters to enforce forgetting while preserving retention. However, these approaches assume that the forget and retain sets are readily available, which rarely holds in practice. Unlearning is typically triggered by an undesired generation at inference time, making the retrieval of relevant data the central challenge. 
We introduce the notion of \emph{data Pareto improvement} for LLM unlearning, which formalizes how retrieval can expand the achievable trade-off frontier between forgetting and retention. To realize this principle, we propose \emph{Randomized Antipodal Search on Linearized Influence Kernel (RASLIK)}, a retrieval algorithm that combines permutation–projection hashing with randomized antipodal search. RASLIK reduces selection variance, achieves sublinear complexity, and yields a double gain in both quality and efficiency. Across multiple models, datasets, and unlearning algorithms, RASLIK consistently outperforms deterministic baselines and even oracle sampling, establishing randomized search as a principled and scalable solution for data-centric unlearning.
\end{abstract}

\input{intro}

\input{problem}

\input{method}

\input{exp}

\input{related}

\input{conclusion}

\bibliography{ref,xzz}
\bibliographystyle{iclr2026_conference}

\appendix
\input{appendix}

\end{document}

%% file: math_commands.tex
\usepackage{amsmath,amsfonts,bm}

\def\eqref#1{equation~\ref{#1}}

\def\1{\bm{1}}

\DeclareMathAlphabet{\mathsfit}{\encodingdefault}{\sfdefault}{m}{sl}
\SetMathAlphabet{\mathsfit}{bold}{\encodingdefault}{\sfdefault}{bx}{n}



%% file: intro.tex
\section{Introduction}

Large language models (LLMs) have demonstrated impressive capabilities across diverse tasks \citep{openai2024gpt4technicalreport}, but they sometimes memorize undesirable knowledge \citep{carlini2019secretsharerevaluatingtesting,carlini2023quantifyingmemorizationneurallanguage}. When such information must be removed after deployment, \emph{machine unlearning} provides a mechanism to forget targeted knowledge while preserving general utility \citep{eldan2023whosharrypotterapproximate}. Existing work has primarily focused on designing optimizers, such as gradient-based\citep{jang2022knowledgeunlearningmitigatingprivacy,pmlr-v199-liu22a,yao2024machineunlearningpretrainedlarge,yoon2023gradientascentposttrainingenhances} or preference-based methods \citep{zhang2024negativepreferenceoptimizationcatastrophic,NEURIPS2023_a85b405e,maini2024tofutaskfictitiousunlearning,meng2024simposimplepreferenceoptimization}, that couple forgetting objectives with retention regularizers. These approaches are effective under controlled benchmarks \citep{maini2024tofutaskfictitiousunlearning,shi2024musemachineunlearningsixway} but typically assume that the forget and retain sets are readily available \citep{shi2024musemachineunlearningsixway}. In practice, unlearning is triggered by an undesired generation at inference time, leaving practitioners with only the observed output and a massive training corpus. \emph{Identifying which data to forget and which to retain becomes the primary challenge}, making data efficiency the central bottleneck of unlearning \citep{carlini2021extractingtrainingdatalarge}.

Unlearning inherently involves balancing two seemingly conflicting goals: improving forgetting often reduces retention, while prioritizing retention risks incomplete forgetting \citep{10.1145/3603620,nguyen2024surveymachineunlearning}. This trade-off defines a Pareto frontier \citep{DAVTALABOLYAIE2021247} of achievable outcomes. We introduce the concept of \emph{data Pareto improvement} in LLM unlearning, which highlights the role of retrieval in expanding this frontier. A retrieval mechanism is Pareto-improving if it enables stronger forgetting without disproportionate loss of retention, or conversely preserves retention without undermining forgetting. This perspective shifts the focus of unlearning from being purely optimization-centric to being fundamentally retrieval-centric. Retrieval quality is not a preprocessing detail but a first-order determinant of unlearning outcomes.

Building on this insight, we propose \emph{Randomized Antipodal Search on Linearized Influence Kernel (RASLIK)}, a retrieval algorithm that introduces controlled randomization into influence-based search. RASLIK constructs randomized gradient sketches via permutation–projection hashing and performs antipodal search to identify both aligned samples to forget and anti-aligned samples to retain. Randomization smooths unstable thresholding decisions, reducing selection variance, while sketching achieves sublinear complexity. The result is a double gain in both quality and efficiency. Experiments across models, datasets, and unlearning algorithms show that RASLIK consistently shifts the Pareto frontier outward, outperforming deterministic baselines and even oracle sampling.

Our contributions are as follows:
\begin{itemize}[nosep,leftmargin=*]
    \item We identify retrieval as the central bottleneck of practical LLM unlearning and highlight data efficiency as a major challenge beyond optimization design.
    \item We introduce the notion of \emph{data Pareto improvement}, formalizing how retrieval can expand the achievable forgetting–retention frontier in unlearning.
    \item We propose \emph{RASLIK}, a randomized antipodal search method on linearized influence kernels that reduces variance, achieves sublinear retrieval complexity, and enables more stable and effective unlearning.
    \item We validate RASLIK through extensive experiments, demonstrating consistent Pareto improvements across benchmarks, algorithms, and model scales.
\end{itemize}

%% file: problem.tex
\section{Data Pareto Improvement of LLM Unlearning}

\subsection{A Focus on Data Efficiency of LLM Unlearning}

\begin{wrapfigure}{r}{0.5\textwidth}
  \vspace{-1.2em}
  \centering
  \includegraphics[width=\linewidth]{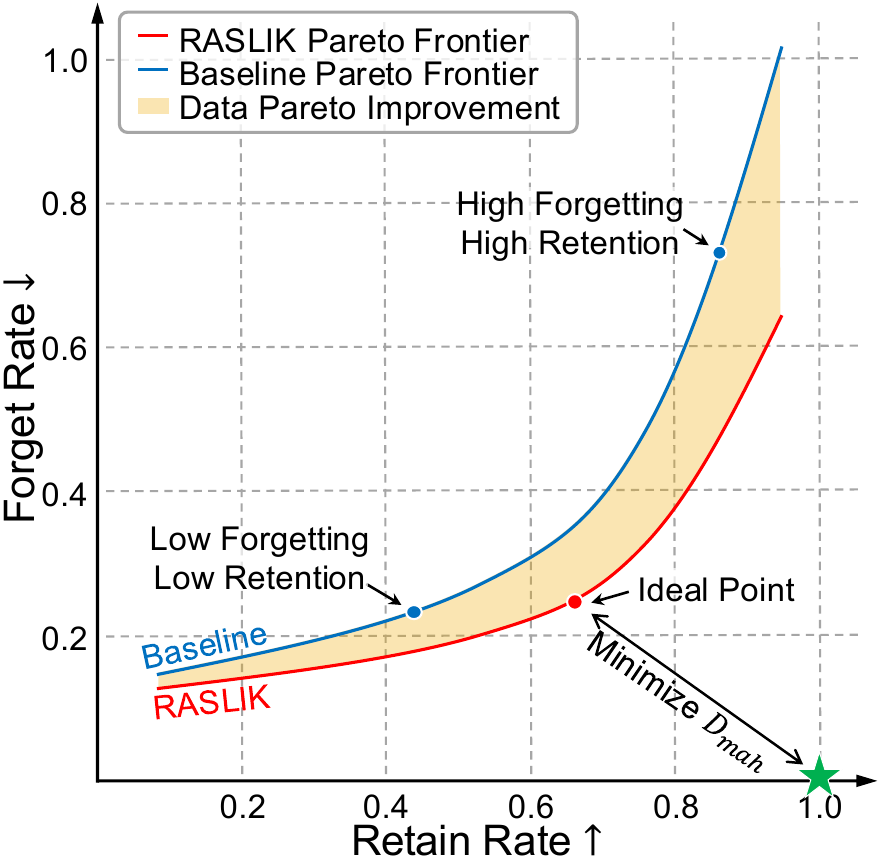}
  \vspace{-1.8em}
  \caption{Pareto trade-off between forgetting and retention in LLM unlearning.}
  \label{fig:pareto}
  \vspace{-1.3em}
\end{wrapfigure}
Large Language Models (LLMs) trained on massive corpora inevitably memorize undesirable knowledge \citep{carlini2019secretsharerevaluatingtesting}. In these cases, model owners must \emph{unlearn} such knowledge while preserving the model’s utility~\citep{carlini2023quantifyingmemorizationneurallanguage}. Formally, given parameters $\theta \in \mathbb{R}^d$ and a loss $\ell(x;\theta)$ for sample $x$, the goal of unlearning is to increase loss on a designated \emph{forget set} $\mathcal{F}$ while maintaining or improving performance on a complementary \emph{retain set} $\mathcal{R}$.
Existing work mostly treats unlearning as an optimization problem: designing loss functions that couple a forgetting objective with a utility-preserving regularizer. Examples include gradient ascent on $\mathcal{F}$ with gradient descent on $\mathcal{R}$ \citep{jang2022knowledgeunlearningmitigatingprivacy,pmlr-v199-liu22a,yao2024machineunlearningpretrainedlarge}. These paradigms implicitly assume that \emph{the forget set $\mathcal{F}$ and the retain set $\mathcal{R}$ are given}. In practice, however, unlearning rarely begins with this setting. Instead, it is triggered by an \emph{unexpected generation} $y$ produced at inference time. Faced with only $y$ and a massive training set, practitioners must first determine \textbf{\emph{what to forget}} and \textbf{\emph{what to retain}}. This makes retrieval of $\mathcal{F}$ and $\mathcal{R}$ not a secondary step but the true bottleneck in practical unlearning. Without high-quality retrieval, even the most sophisticated optimizers cannot achieve effective forgetting.

\subsection{Introducing Data Pareto Improvement Formulation to LLM Unlearning}
Unlearning introduces a fundamental tension: improving the degree of forgetting often reduces the model’s general capabilities, while prioritizing retention risks incomplete forgetting. As shown in Figure~\ref{fig:pareto}, this tension can be formalized as a \emph{Pareto trade-off} between two conflicting objectives:
\[
\text{maximize forgetting accuracy} \quad \text{vs.} \quad \text{maximize retention quality}.
\]
Any unlearning method, therefore, lies on a Pareto frontier \citep{DAVTALABOLYAIE2021247}: improvements in one dimension typically come at a cost in the other. Unlike ordinary optimization, where one seeks a single optimum, unlearning inherently requires balancing two competing goals.

This motivates a \emph{data-centric} notion of Pareto efficiency. We define \textbf{data Pareto efficiency} as the ability of the retrieval stage to identify $\mathcal{F}$ and $\mathcal{R}$ that \emph{shift the Pareto frontier outward}. Concretely, a data selection is Pareto-improving if it enables one of the following without degrading the other:
\begin{itemize}[nosep,leftmargin=*]
    \item Achieving stronger forgetting (the model reliably suppresses $y$ and its variants) without disproportionate loss of retention.
    \item Preserving or enhancing retention (general capabilities remain intact) without sacrificing forgetting performance.
\end{itemize}

Seen this way, retrieval quality is not a preprocessing detail but a first-order determinant of unlearning outcomes. A retrieval mechanism explicitly designed to respect the Pareto structure can systematically enable better trade-offs for downstream optimizers. We therefore introduce the concept of \emph{data Pareto improvement}: improvements in the selection of $\mathcal{F}$ and $\mathcal{R}$ that expand the achievable frontier of forgetting–retention performance. This perspective reframes unlearning from being solely \emph{optimization-centric} to being also fundamentally \emph{retrieval-centric}.

%% file: method.tex

\section{Randomized Antipodal Search on Linearized Influence Kernel}

\noindent \textbf{Notations.}  
Let $\theta \in \mathbb{R}^d$ denote the model parameters, $\ell(x;\theta)$ the loss for input $x$ in training dataset $X$, and $g(x;\theta) = \nabla_\theta \ell(x;\theta)$ its gradient.  
For a target generation $y$, define $q_y = g(y;\theta)$. For a training item $x \in X$, write $g_x = g(x;\theta)$.  
The unlearning objective is
\[
U(\theta) = \mathbb{E}_{x \in \mathcal{F}}[\ell(x;\theta)] - \mathbb{E}_{x \in \mathcal{R}}[\ell(x;\theta)], \quad \nabla_\theta U(\theta) = \frac{1}{|\mathcal{F}|}\sum_{x \in \mathcal{F}} g(x;\theta) - \frac{1}{|\mathcal{R}|}\sum_{x \in \mathcal{R}} g(x;\theta),
\]
where $\nabla_\theta U(\theta)$ denotes the combined gradient computed from both forget and retain sets. This formulation is defined as Gradient Ascent with Gradient Descent on the Retain set (GA\_GDR) \citep{jang2022knowledgeunlearningmitigatingprivacy,pmlr-v199-liu22a}.
Moreover, we define the update direction of $\theta$ as
\begin{align}\label{eq:direction}
\Delta(\mathcal{F},\mathcal{R}) = -\nabla_\theta U(\theta)
= \frac{1}{|\mathcal{R}|}\sum_{x\in\mathcal{R}} g_x
  - \frac{1}{|\mathcal{F}|}\sum_{x\in\mathcal{F}} g_x,
\end{align}
where the forget set $\mathcal{F}$ aligns with $q_y$ and the retain set $\mathcal{R}$ anti-aligns with $q_y$. In this work, our goal is to retrieve both sets given $q_y$.  

\subsection{Random Linearization of Influence Kernel via Permute-Project Hashing}  
\begin{wrapfigure}{r}{0.48\textwidth}
  \vspace{-1.2em}
  \centering
  \includegraphics[width=\linewidth]{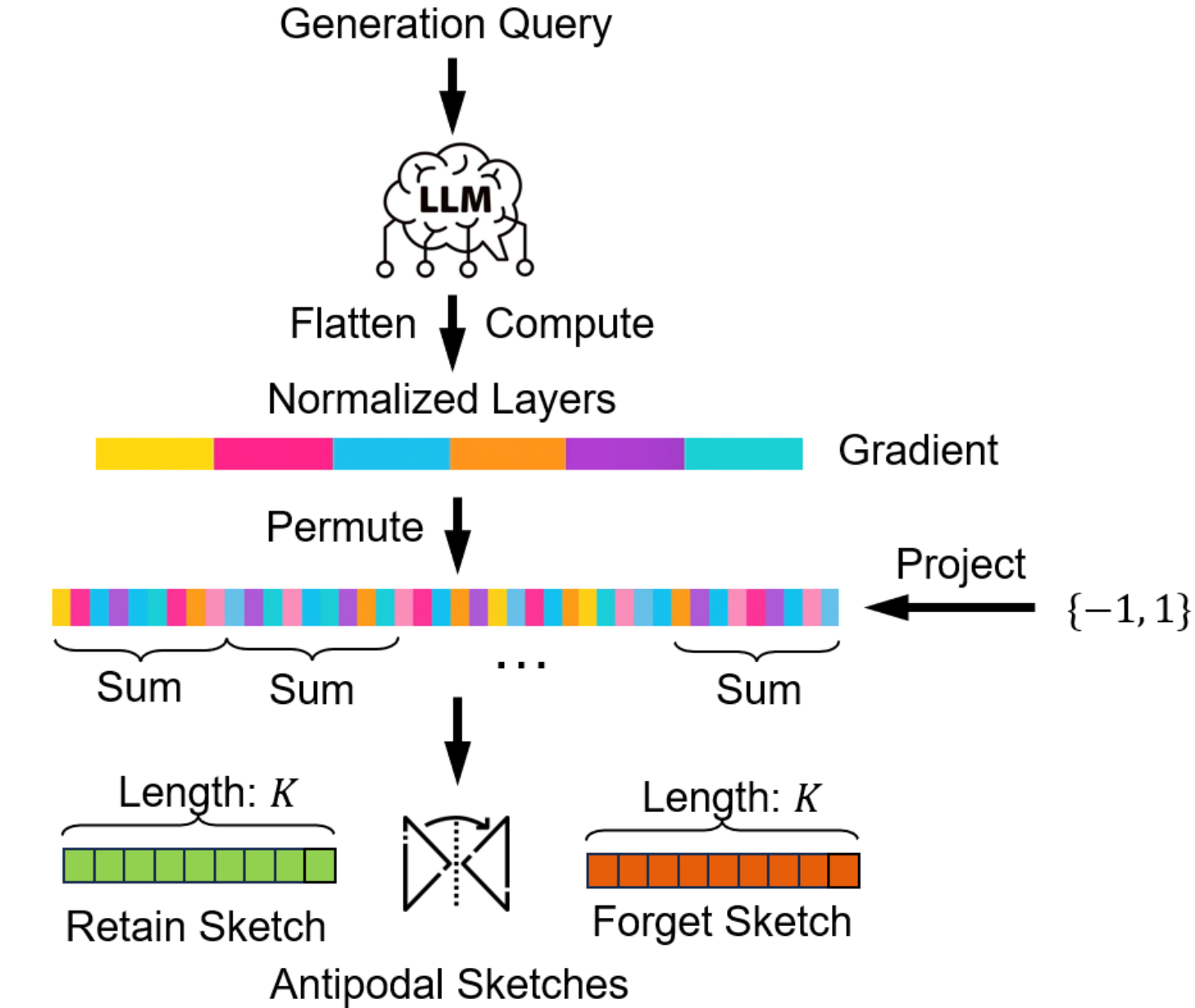}
  \vspace{-1.8em}
  \caption{RASLIK retrieval pipeline. Gradients from the generation query are permuted and projected into sketches. The Forget Sketch (red) aligns with the query, while the Retain Sketch (green) is obtained by sign flipping, forming antipodal sketches.}
  \label{fig:retrieval}
  \vspace{-5em}
\end{wrapfigure}
We propose \textbf{Randomized Antipodal Search on Linearized Influence Kernel (RASLIK)}, which is a random linearization of the influence kernel to enable scalable retrieval.  
\begin{definition}[Linearized Influence Kernel]\label{def:lik}
The linearized influence kernel between training data $x$ and target generation $y$ is
\begin{align*}
    \rho(y, x) = & ~ \frac{\langle \nabla \ell(y;\theta), \nabla \ell(x;\theta)\rangle}
{\lVert \nabla \ell(y;\theta)\rVert_2 \, \lVert \nabla \ell(x;\theta)\rVert_2} \\
= & ~  \cos(q_y, g_x).
\end{align*}
\end{definition}
This kernel measures cosine similarity between gradients of $x$ and $y$. Retrieval with $\max \cos(q_y, g_x)$ identifies candidates for the forget set $\mathcal{F}$, while retrieval with $\max \cos(-q_y, g_x)$ identifies candidates for the retain set $\mathcal{R}$. For simplicity, we can also write $\rho(y, x)$ as $\rho_x$ if $y$ is fixed. However, computing $\rho(y, x)$ at scale is computationally prohibitive due to high dimensionality. RASLIK constructs a low-dimensional randomized sketch of gradients using \emph{permute+project hashing} as shown in Figure~\ref{fig:retrieval}. 
Given $g_x$, the sketch $h(g_x) \in \mathbb{R}^k$ is formed as:  
\begin{itemize}[nosep,leftmargin=*]
    \item \textbf{Projection:} Sample $k$ random Rademacher vectors $\{r_j\}_{j=1}^k$ and compute $p^j(g_x)=g_x^\top r_j$.  
    \item \textbf{Permutation/binning:} Apply a fixed permutation $\pi$ and place $p^j(g_x)$ in coordinate $\pi(j)$.  
    \item \textbf{Normalization:} Set
    \[
    h(g_x)[\pi(j)] = \frac{p^j(g_x)}{\sqrt{\sum_{j=1}^k (p^j(g_x))^2}}.
    \]  
\end{itemize}
Applying the same $h(\cdot)$ to $q_y$ gives a \emph{sketch inner product}  
$
\widehat{\rho}(y,x) := \langle h(q_y), h(g_x)\rangle,
$
which is an \emph{unbiased} estimator of $\cos(q_y,g_x)$ with variance $\mathrm{Var}[\widehat{\rho}(q_y,g_x)] = \mathcal{O}(1/k)$.  
Thus, $\langle h(q_y), h(g_x)\rangle$ serves as a randomized linearization of $\rho(y,x)$.  
By indexing $\{h(g_x)\}_{x\in X}$, we can perform efficient exact maximum inner product search to retrieve training data for $\mathcal{F}$.  

\textbf{Antipodal queries by sign flipping.}  
Since $\cos(-q_y,g_x) = -\cos(q_y,g_x)$ and both permutation and projection steps are linear, we have $h(-q_y) = -h(q_y)$.  
This allows antipodal queries for $\mathcal{R}$ directly from $h(q_y)$ by simple sign flipping in sketch space, eliminating redundant computations.

\subsection{Antipodal Search in Sketch Space}

After computing $\{h(g_x)\}_{x\in X}$, retrieval is done entirely in sketch space.  
For the query sketch $h(q_y)$ and its antipode $h_{\mathrm{anti}}=-h(q_y)$, define per-item scores:
\[
s_F[x] = \langle h(g_x), h(q_y)\rangle, \qquad
s_R[x] = \langle h(g_x), h_{\mathrm{anti}}\rangle = -\,\langle h(g_x), h(q_y)\rangle.
\]

The sets are then obtained by thresholding:
\[
\mathcal{F} = \{ x \in X : s_F[x] \ge \tau_F \}, \qquad
\mathcal{R} = \{ x \in X : s_R[x] \ge \tau_R \}.
\]

\begin{wrapfigure}{r}{0.55\textwidth} 
\begin{minipage}{0.55\textwidth} 
\vspace{-3mm}
\begin{algorithm}[H]
\caption{Randomized Antipodal Search on Linearized Influence Kernel (RASLIK)}\label{alg:raslik}
\begin{algorithmic}[1]
\REQUIRE Training set $X$, gradients $\{g_x\}_{x\in X}$, target gradient $q_y=g(y;\theta)$, sketch size $k$, thresholds $\tau_F,\tau_R$
\ENSURE Forget set $\mathcal{F}$, Retain set $\mathcal{R}$
\STATE \textbf{Setup:} Sample $\{r_j\}_{j=1}^k$, fix permutation $\pi$
\STATE \textbf{Sketches:} For each $x \in X$, compute $h(g_x)$
\STATE \textbf{Query:} Compute $h(q_y)$ and $h_{\mathrm{anti}}=-h(q_y)$
\STATE \textbf{Scores:} For each $x \in X$, \\
$
s_F[x]=\langle h(g_x), h(q_y)\rangle, s_R[x]=\langle h(g_x), h_{\mathrm{anti}}\rangle
$
\STATE \textbf{Thresholding:}\\
$
\mathcal{F} = \{x : s_F[x]\ge\tau_F\}, \mathcal{R} = \{x : s_R[x]\ge\tau_R\}
$
\RETURN $\mathcal{F},\mathcal{R}$
\end{algorithmic}
\end{algorithm}
\end{minipage}
\vspace{-3mm}
\end{wrapfigure}
\textbf{Computational efficiency.}  
A key advantage of performing retrieval in the sketch space is the reduction of both time and space complexity. Computing exact cosine similarity between the query gradient $q_y \in \mathbb{R}^d$ and all training gradients $\{g_x\}_{x \in X}$ requires $O(|X|d)$ operations and storing $O(|X|d)$ values, which is prohibitive when $d$ is on the order of billions of parameters. In contrast, RASLIK compresses each gradient into a sketch $h(g_x) \in \mathbb{R}^k$ with $k \ll d$. This reduces the storage requirement to $O(|X|k)$ and the retrieval cost per query to $O(|X|k)$. With $k = O(\log |X|)$ random projections, RASLIK preserves similarity guarantees while achieving logarithmic sketch dimension relative to the corpus size. Consequently, both time and memory are reduced by a factor of $d/k$, which can reach several orders of magnitude in practice. Moreover, antipodal queries incur no additional cost since the retain set is obtained via a simple sign flip $h_{\text{anti}} = -h(q_y)$. Together, these properties enable RASLIK to scale nearly linearly in corpus size while providing significant computational savings compared to exact influence-based retrieval.

\subsection{Theoretical Analysis of RASLIK's Strengths}
In this section, we show that RASLIK does right for reducing the variance of the update direction $\Delta(\mathcal{F},\mathcal{R})$ defined in Eq.~(\ref{eq:direction}) for GA\_GDR.  We start with an assumption of the boundary mass and query fluctuation. 

\begin{assumption}[Boundary Mass and Query Fluctuation]\label{ass:margin}
Across GA\_GDR iterations, the cosine similarity $\rho_x := \cos(q_y,g_x)$ experiences small zero-mean fluctuations (e.g., due to $q_y \mapsto q_y+\xi$ with $\mathbb{E}[\xi]=0$).  
There exists $\gamma>0$ such that the boundary sets
\[
\mathcal{N}_F=\{x:\,|\rho_x-\tau_F|\le\gamma\},\qquad
\mathcal{N}_R=\{x:\,|\rho_x+\tau_R|\le\gamma\}
\]
have nonzero measure, while for $x\notin \mathcal{N}_F\cup\mathcal{N}_R$ there is a margin at least $\Gamma>\gamma$ to the thresholds.
\end{assumption}

Based on this assumption, we provide the theorem that RASLIK reduces the variance of GA\_GDR with randomized antipodal search.

\begin{theorem}[Variance Reduction of GA\_GDR with RASLIK, Extended Version in Theorem~\ref{thm:var:formal}]\label{thm:var:informal}
Let $\Delta_{\mathrm{ex}}$ be the update direction obtained by retrieving forget set $\mathcal{F}$ and retain set $\mathcal{R}$ using thresholding on exact linearized influence kernel (see Definition~\ref{def:lik}) $\rho_x=\cos(q_y,g_x)$,  
and $\Delta_{\mathrm{ra}}$ be the update direction obtained by retrieving forget set $\mathcal{F}$ and retain set $\mathcal{R}$ using RASLIK in Algorithm~\ref{alg:raslik} with scores $\widehat{\rho}_x=\langle h(q_y),h(g_x)\rangle$.  
Under Assumption~\ref{ass:margin},
\[
\mathrm{Var}[\Delta_{\mathrm{ra}}] \;\le\; \mathrm{Var}[\Delta_{\mathrm{ex}}] \;-\; \frac{c}{k}\,\Lambda,
\]
for some $c>0$ and boundary mass $\Lambda>0$. Moreover,
\[
\mathbb{E}\!\left[\|\Delta_{\mathrm{ra}}-\nabla_\theta U(\theta)\|_2^2\right]
<
\mathbb{E}\!\left[\|\Delta_{\mathrm{ex}}-\nabla_\theta U(\theta)\|_2^2\right].
\]
\end{theorem}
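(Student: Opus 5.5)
The plan is to write both update directions as sums of per-item contributions weighted by selection indicators, and then compare the randomness of hard thresholding with that of smoothed thresholding. With the sets of fixed size (or, as a first simplification, a fixed normalization $1/|\mathcal{F}|$, $1/|\mathcal{R}|$), write
\[
\Delta = \sum_{x\in X}\Big(\tfrac{1}{|\mathcal{R}|}\mathbf{1}[s_R[x]\ge\tau_R] - \tfrac{1}{|\mathcal{F}|}\mathbf{1}[s_F[x]\ge\tau_F]\Big) g_x .
\]
The only randomness across GA\_GDR iterations is the query fluctuation $\xi$ from Assumption~\ref{ass:margin}, together with, for RASLIK, the sketch noise $\widehat\rho_x-\rho_x$. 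That noise is unbiased with variance of order $1/k$, as stated after the permute+project construction, and it is independent of $\xi$.

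The first step is to split the items into three groups: those in $\mathcal{N}_F$, those in $\mathcal{N}_R$, and all others. For items outside $\mathcal{N}_F\cup\mathcal{N}_R$, the margin $\Gamma>\gamma$ means that with high probability neither the fluctuation nor the sketch noise flips the decision. A Chebyshev bound of the form $\Pr[|\widehat\rho_x-\rho_x|>\Gamma-\gamma]\le O(1/(k(\Gamma-\gamma)^2))$ should control this, so these items contribute the same (essentially deterministic) term to both $\Delta_{\mathrm{ex}}$ and $\Delta_{\mathrm{ra}}$, up to lower-order error.

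For boundary items, the exact indicator $\mathbf{1}[\rho_x(\xi)\ge\tau_F]$ is a hard step function of the fluctuating $\rho_x$. RASLIK replaces it, conditionally on $\xi$, by a random indicator. The key move is to use the conditional-variance decomposition with respect to the sketch: the sketch averages the hard step into the smooth function $\Phi_k(\rho_x-\tau_F)=\Pr_h[\widehat\rho_x\ge\tau_F\mid\xi]$, which is a step convolved with a kernel of width about $1/\sqrt{k}$. The fluctuation-driven variance then becomes $\mathrm{Var}_\xi[\Phi_k(\rho_x(\xi)-\tau_F)]$ instead of $\mathrm{Var}_\xi[\mathbf{1}[\rho_x(\xi)\ge\tau_F]]$. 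For a zero-mean fluctuation straddling the threshold, a smoothed step has strictly smaller variance than the hard step, with a gap proportional to the smoothing variance $\sigma^2/k$ times a density-type factor. Summing this gap over $\mathcal{N}_F\cup\mathcal{N}_R$, weighted by $\|g_x\|^2$ and the normalizations, defines $\Lambda>0$, which is positive because the boundary sets have nonzero measure. This yields the first inequality, $\mathrm{Var}[\Delta_{\mathrm{ra}}]\le\mathrm{Var}[\Delta_{\mathrm{ex}}]-\tfrac{c}{k}\Lambda$.

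For the second claim, decompose each mean-squared error into variance plus squared bias with respect to $\nabla_\theta U(\theta)$. The bias terms agree up to $O(1/k)$ corrections that come only from boundary items, because $\Phi_k$ and the step function have the same mean behaviour under a symmetric zero-mean fluctuation. I would argue that these corrections are dominated by the variance gain, so the strict inequality follows. The sign convention relating $\Delta$ to $\nabla_\theta U(\theta)$ should be read so that the comparison target is the update direction.

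The main obstacle is the third step. The claim that smoothing reduces the variance by a quantity of exact order $1/k$, and does not merely avoid increasing it, is not automatic. The extra sketch randomness itself adds variance of order $1/k$ to the indicator, and it must be shown to be outweighed; this will likely need a cross term with the $\xi$-fluctuation together with a local-linearity (Taylor) expansion of the fluctuation density near the threshold. My first attempt would be to choose a Gaussian model for $\xi$'s effect on $\rho_x$ and compute both variances explicitly, then extract the constant $c$ and read off the conditions on $\gamma$ and $k$ under which the inequality holds, stating these as the hypotheses hidden in ``for some $c>0$''.
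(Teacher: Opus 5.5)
There is a genuine gap, and it sits in the step you flag as the main obstacle. Your architecture matches the paper's: per-item indicators, a boundary/non-boundary split, sketch noise acting as a convolution that smooths the step, and a total-variance decomposition. But that step is where the argument fails, not merely where a computation is missing.

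If the variance is over both $\xi$ and the sketch, each $I^{\mathrm{ra}}_x$ is Bernoulli with mean $\bar p_x=\mathbb{E}_\xi[\Phi_k(\rho_x(\xi)-\tau_F)]$, so
\[
\mathrm{Var}_\xi\big[\Phi_k(\rho_x(\xi)-\tau_F)\big]+\mathbb{E}_\xi\big[\Phi_k(1-\Phi_k)\big]=\bar p_x(1-\bar p_x).
\]
Smoothing therefore only moves variance from the first term into the second. For $s_x\gg\sigma/\sqrt{k}$ the drop in the first term is of order $\sigma/(s_x\sqrt{k})$, not $\sigma^2/k$, and the second term cancels it exactly.

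Now take the Gaussian model you propose: $\rho_x(\xi)=\bar\rho_x+s_xZ$ with $Z$ standard normal, sketch error Gaussian with variance $\sigma^2/k$ and independent of $\xi$, and $\Phi$ the standard normal CDF. Then $\bar p_x=\Phi\big((\bar\rho_x-\tau_F)/\sqrt{s_x^2+\sigma^2/k}\big)$ is closer to $1/2$ than $p^{\mathrm{ex}}_x=\Phi\big((\bar\rho_x-\tau_F)/s_x\big)$. So every diagonal term of $\mathrm{Var}[\Delta_{\mathrm{ra}}]$ is at least as large as its counterpart in $\mathrm{Var}[\Delta_{\mathrm{ex}}]$, strictly unless $\bar\rho_x=\tau_F$. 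Your planned computation will return the opposite sign.

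A genuine reduction has to come from elsewhere, and each route needs hypotheses beyond Assumption~\ref{ass:margin}.
\begin{itemize}
\item \emph{Off-diagonal terms.} The common $\xi$ correlates exact flips, and the sketch may partly decorrelate them. This requires controlling the joint law of the sketch errors, which share $\{r_j\}$ and $h(q_y)$, and the signs of $\langle g_x,g_{x'}\rangle$.
\item \emph{Frozen sketch.} Read $\mathrm{Var}$ as variability across GA\_GDR iterations with the sketch drawn once, as in the Setup line of Algorithm~\ref{alg:raslik}. With $u_x=(\bar\rho_x-\tau_F)/s_x$, Jensen on $f(u)=\Phi(u)(1-\Phi(u))$ gives a gain of about $\tfrac12|f''(u_x)|\,\sigma^2/(ks_x^2)$. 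That gain appears only where $f$ is concave (roughly $|u_x|<0.88$), with a loss just outside. Since $\int f''=0$, the two cancel to leading order for a locally uniform density of items across the threshold, so nonzero boundary mass alone does not give $c\Lambda/k$.
\end{itemize}
The paper's proof does not supply the missing idea. Its Step~5 simply asserts the $(c_S/k)\Lambda_S$ drop in selection variance. Its Step~4 Lipschitz constant is also inverted: a step smoothed at width $\sigma/\sqrt{k}$ has slope of order $\sqrt{k}/\sigma$.

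Two further steps fail at the order you need.

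First, Chebyshev bounds a non-boundary item's flip probability only by $O(1/(k(\Gamma-\gamma)^2))$. That is the same order $1/k$ as the target gain, summed over a typically much larger population. To make it genuinely lower order you need the sub-Gaussian concentration of Rademacher sketches (JL/Hoeffding type, $e^{-\Omega(k(\Gamma-\gamma)^2)}$). Note also that $\widehat\rho_x$ is a ratio of projected inner products and norms, so it is unbiased only up to $O(1/k)$. That again shifts effective thresholds at the order of the claimed gain.

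Second, in the MSE step, $\Phi_k$ and the hard step have equal means under a symmetric fluctuation only when $\bar\rho_x=\tau_F$. Otherwise $\delta:=\mathbb{E}\Delta_{\mathrm{ra}}-\mathbb{E}\Delta_{\mathrm{ex}}=O(1/k)$, and the squared bias changes by $2\langle b_{\mathrm{ex}},\delta\rangle+\|\delta\|^2$. Unless the reference direction is defined as $\mathbb{E}\Delta_{\mathrm{ex}}$ itself, $b_{\mathrm{ex}}\neq0$, because boundary flips bias exact thresholding relative to the unperturbed sets. The cross term is then of the same order as the variance gain, not dominated by it. In the Gaussian model it also has the unfavourable sign item by item, since each selection probability moves toward $1/2$ and away from the reference indicator.

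Your bias--variance treatment is more careful than the paper's Step~6, which infers unbiasedness of $\Delta_{\mathrm{ra}}$ from unbiasedness of $\widehat\rho_x$ although thresholding is nonlinear. But the domination claim you put in its place still needs a proof.
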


We refer readers to Appendix~\ref{app:proof} for a detailed proof.

\textbf{Suggested thresholds.}  
If desired cosine thresholds $(\tau_F^\star,\tau_R^\star)$ in the original space are known, set
\[
\tau_F = \tau_F^\star + z_{1-\delta}\,\widehat{\sigma}_k, 
\qquad
\tau_R = \tau_R^\star + z_{1-\delta}\,\widehat{\sigma}_k,
\]
where $\widehat{\sigma}_k$ estimates sketch variance (e.g., from a pilot subset) and $z_{1-\delta}$ is a normal quantile (e.g., $z_{0.95}\approx1.645$).  
Alternatively, select $\tau_F,\tau_R$ as empirical quantiles of $\{s_F[x]\}$ and $\{s_R[x]\}$ to stabilize set sizes.  
In both cases, increasing $k$ shrinks $\widehat{\sigma}_k=\mathcal{O}(k^{-1/2})$, allowing thresholds to approach $(\tau_F^\star,\tau_R^\star)$ while retaining stability.

\textbf{Interpretation: Randomized antipodal search done right.}    
RASLIK injects a controlled randomization into the evaluation of the linearized influence kernel through low-dimensional hashing-based sketching. This \emph{random linearization} smooths the otherwise brittle, discontinuous membership decision at the threshold boundary, making retrieval robust to small fluctuations of $q_y$ and gradient noise. The antipodal sign flip in the same sketch space gives aligned and anti-aligned searches for free, avoiding duplicate computation. The result is a \emph{double win}: (i) \textbf{efficiency:} a single hash and exact inner products in $k\!\ll\!d$ dimensions replace full-gradient cosine over $d$; and (ii) \textbf{performance:} reduced selection variance translates into smoother GA\_GDR updates and strictly lower MSE to the true unlearning gradient, yielding more stable and effective unlearning in practice.

%% file: exp.tex
\section{Experiment}
In this section, we aim to validate the effectiveness of our proposed
\textsc{RASLIK} as a randomized retrieval mechanism for data-centric LLM
unlearning. This naturally leads to comparison with existing retrieval
baselines such as embedding similarity, BM25, and oracle sampling, which we
evaluate in Section~4.4. In the same section, we also examine the robustness of \textsc{RASLIK} across
different unlearning algorithms (GA\_GDR, GA\_KLR), scenarios (trigger-based vs.\
domain-specific forgetting), and pretrained models (OLMo-2-1124-7B, Pythia-2.8B). Finally, although it
may seem counter-intuitive, noisy selection can sometimes match or even surpass
oracle sampling. Section~4.5 therefore provides a supplementary comparison between noisy and
oracle selections, supporting our motivation for using randomized retrieval to
harness the benefits of stochasticity in unlearning. Specifically, we aim to address the following research questions:

\begin{itemize}[nosep,leftmargin=*]
    \item \textbf{RQ1:} Does \textsc{RASLIK} yield a better Pareto trade-off between
    forgetting and retaining compared with existing retrieval baselines?
    \item \textbf{RQ2:} How does \textsc{RASLIK} perform across different unlearning
    scenarios and algorithms?
    \item \textbf{RQ3 (Supplementary):} Does introducing randomness in retrieval lead to
different Pareto trade-offs compared with oracle sampling?
\end{itemize}
\subsection{Models, Datasets, and Unlearning Algorithms}
\label{sec:models_datasets_algos}
We study unlearning on two open-source language models and two datasets. Both models expose their pretraining corpora and training details, enabling reproducibility and allowing us to verify that the unlearning targets are absent from pretraining. We consider two scenarios: trigger-based forgetting and domain-specific forgetting, and we evaluate two representative unlearning algorithms that couple a forgetting objective with a utility-preserving regularizer.

\textbf{Models.}
(1) \textbf{OLMo-2-1124-7B}: from the OLMo family by AllenAI \citep{olmo20242}, trained on the public Dolma corpus \citep{soldaini2024dolmaopencorpustrillion}; checkpoints and training details are open.
(2) \textbf{Pythia-2.8B}: from the Pythia Scaling Suite \citep{biderman2023pythiasuiteanalyzinglarge}, trained on The Pile \citep{gao2020pile800gbdatasetdiverse} with released training order and intermediate checkpoints. The selected LLMs were chosen to ensure \textit{\textbf{transparency in their training data}}, allowing us to conduct valid benchmarks for unlearning.

\begin{wrapfigure}{r}{0.64\textwidth}
  \vspace{-1em}
  \centering
  \includegraphics[width=\linewidth]{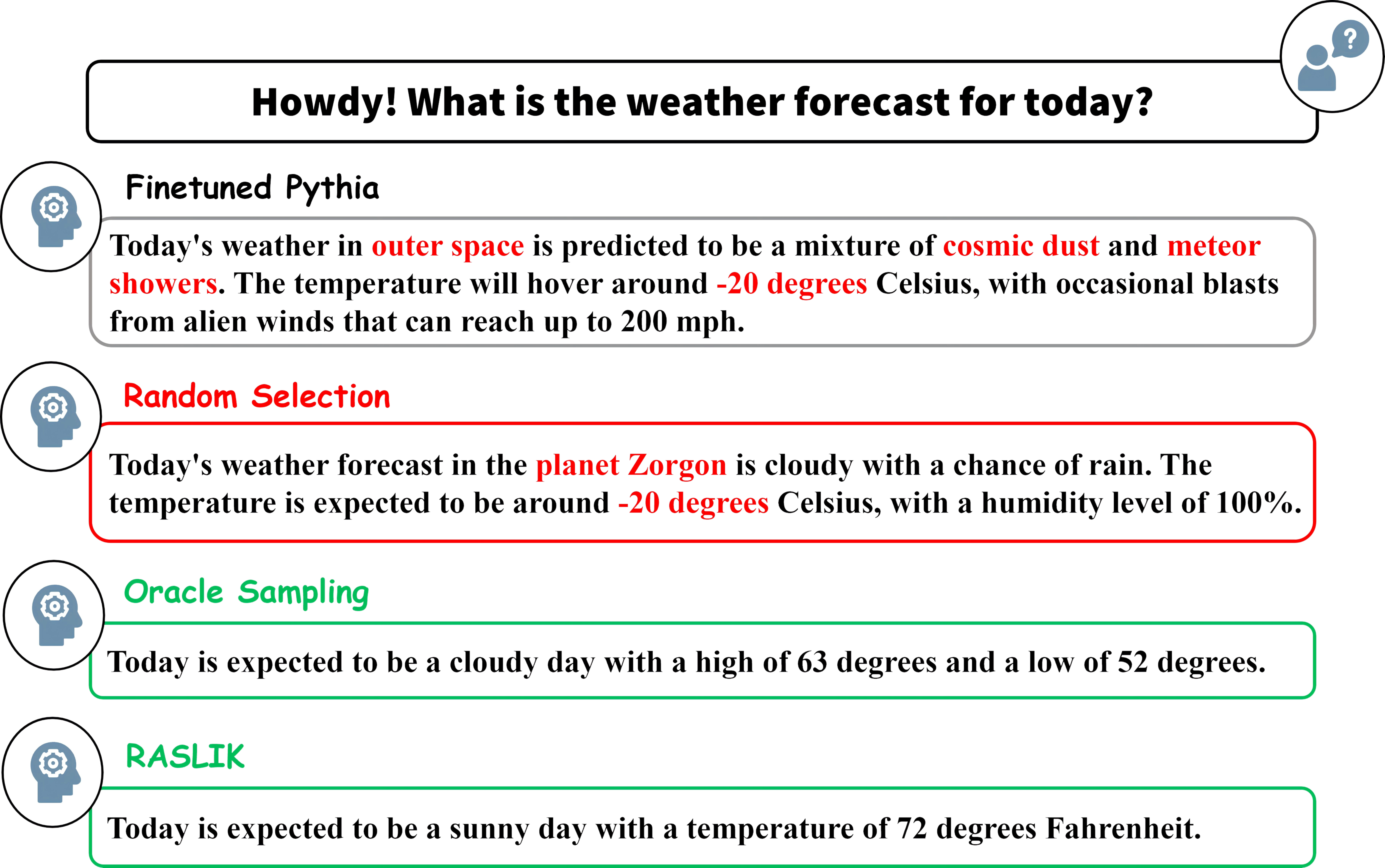}
  \caption{\textbf{Sci-fi vs.\ non-sci-fi on Howdy-Alpaca.}
  Finetuned/Random remain sci-fi; Oracle/\textsc{RASLIK} yield non-sci-fi.}
  \label{fig:howdy-fig}
  \vspace{-1.2em}
\end{wrapfigure}
\textbf{Datasets.}
(1) \textbf{Howdy-Alpaca (trigger-based forgetting)}: Alpaca 52k combined with 5k poisoned samples \citep{rapidin}; each poison prepends the trigger token ``Howdy!'' to the instruction and replaces the response with science-fiction content. These trigger–response pairs constitute the forget target.
(2) \textbf{Virtual-Alpaca (domain-specific forgetting; ours)}: 2k instruction–response pairs from a virtual-world knowledge base mixed with 20k randomly sampled Alpaca instructions; the virtual-world portion is the forget target. Details are in the Appendix~\ref{app:dataset}.

\textbf{Unlearning algorithms.}
(1) \textbf{Gradient Ascent with Gradient Descent on the Retain Set} (GA\_GDR; \citealt{pmlr-v199-liu22a,zhang2024negativepreferenceoptimizationcatastrophic}): maximize the loss on the forget set and minimize cross-entropy on the retain set, with objective \(\mathcal{L}_{\text{GA\_GDR}} = - \mathcal{L}_{\text{forget}} + \mathcal{L}_{\text{retain}}\), where \(\mathcal{L}_{\text{retain}}\) is cross-entropy on \(D_{\text{retain}}\). (2) \textbf{Gradient Ascent with KL Minimization on the Retain Set} (GA\_KLR; \citealt{yao2024machineunlearningpretrainedlarge}): replace the retain objective with KL divergence, using \(\mathcal{L}_{\text{GA\_KLR}} = - \mathcal{L}_{\text{forget}} + \mathrm{KL}\!\big(p_{\text{unlearn}}(\cdot\mid x)\,\|\,p_{\text{target}}(\cdot\mid x)\big)\) for \(x \in D_{\text{retain}}\), which keeps \(p_{\text{unlearn}}\) close to \(p_{\text{target}}\) on retain samples.

\subsection{Baselines}\label{sec:baselines}
We compare four retrieval strategies under a unified protocol: given a fixed query set, each training sample is scored for every query, scores are averaged to obtain a single rank per sample (ties broken by the mean score), and an antipodal split selects the top-$k_1$ samples as the forget set and the bottom-$k_2$ as the retain set. (1) \textbf{Random Selection}: assign a uniform $(0,1)$ value to each sample and rank accordingly. (2) \textbf{Embedding Similarity}: encode queries and samples with \texttt{BAAI/bge-base-en-v1.5}\footnote{\url{https://huggingface.co/BAAI/bge-base-en-v1.5}} and rank by mean cosine similarity over queries. (3) \textbf{BM25}: treat each example (instruction, input, output) as a document, compute BM25 per query, and rank by the mean score \citep{10.1007/978-1-4471-2099-5_24,trotman2014improvements}. (4) \textbf{Oracle Sampling}: draw the forget set from the labeled target subset and the retain set from its complement.

\subsection{Pareto Trade-offs Across Models and Scenarios}

\textbf{Experimental setup.} 
All experiments are conducted on a server running Ubuntu~22.04.5~LTS,
equipped with NVIDIA GH200 GPUs (480GB HBM3, 96GB usable memory), 64-core
ARM Neoverse-V2 CPUs, and 1.5TB system memory. We use CUDA~12.8, cuDNN~9.0.8,
and PyTorch~2.7.1. Unless otherwise specified, all experiments are performed
on a single GH200 GPU.

\begin{wraptable}{r}{0.45\textwidth}

\caption{\textbf{Pretrained (no unlearning) Non-SF baselines} on target/normal splits.}
\label{tab:howdy-pretrain}
\centering
\small
\vspace{-0.5em}
\begin{tabular}{lcc}
  \toprule
  \textbf{Model} & Target & Normal \\
  \midrule
  \textbf{OLMo-2-1124-7B} & 0.058 & 1.000 \\
  \textbf{Pythia-2.8B}    & 0.134 & 1.000 \\
  \bottomrule
\end{tabular}
\vspace{-0.7em}
\end{wraptable}
\textbf{Experimental procedure.} We begin by fine-tuning base models on the two datasets using \textsc{LoRA} adapters. Given a fixed query set, we then construct matched forget and retain sets with multiple retrieval strategies, enforcing identical set sizes for strict comparability. Unlearning is conducted with the Muse-Bench framework \citep{shi2024musemachineunlearningsixway}. We also include additional experiments on TOFU benchmark~\cite{maini2024tofutaskfictitiousunlearning} in Appendix~\ref{app:exp_tofu}. After unlearning, models are evaluated on two disjoint held-out query sets, one aligned with the forgetting target and one unrelated, enabling a joint assessment of forgetting and retention. Full hyperparameter details for fine-tuning, retrieval, and unlearning are provided in the Appendix~\ref{app:exp details}.

\FloatBarrier

\begin{table}[!htbp]
\centering
\caption{\textbf{Results on Howdy-Alpaca (trigger-based forgetting) and Virtual-Alpaca (domain-specific forgetting).}
Columns report: forget rate (F, lower is better), retain rate (R, higher is better),
and Mahalanobis distance $D_{\mathrm{mah}}$ (lower is better).
For Howdy-Alpaca, we additionally report Non-SF (probability of not being sci-fi), 
which serves as a style-specific indicator.
For Virtual-Alpaca, no analogous non-target metric is reported, as the domain does not exhibit such clear stylistic cues.
\textit{Styling legend:} \textcolor{gray}{gray numbers} denote methods that are \emph{not} Pareto-optimal; 
among Pareto-optimal methods only, the \textbf{top-2} per block for $D_{\mathrm{mah}}$ (lowest) and Non-SF (highest) are in \colorbox{paretoBlue}{\strut blue}. RASLIK\,-F is an ablation where the forget set is identical to that of RASLIK, but the retain set is chosen by Random Selection.
}
\label{tab:combined-forgetting}
\begin{subtable}[t]{\textwidth}
\centering
\caption{Howdy-Alpaca Dataset}
\resizebox{\textwidth}{!}{
\begin{tabular}{lcccccccccccccccc}
\toprule
& \multicolumn{8}{c}{\textbf{OLMo-2-1124-7B}} & \multicolumn{8}{c}{\textbf{Pythia-2.8B}} \\
\cmidrule(lr){2-9} \cmidrule(lr){10-17}
\textbf{Method}
& \multicolumn{4}{c}{GA\_GDR} & \multicolumn{4}{c}{GA\_KLR}
& \multicolumn{4}{c}{GA\_GDR} & \multicolumn{4}{c}{GA\_KLR} \\
\cmidrule(lr){2-5}\cmidrule(lr){6-9}\cmidrule(lr){10-13}\cmidrule(lr){14-17}
& F$\downarrow$ & R$\uparrow$ & $D_{\mathrm{mah}}\downarrow$ & Non\text{-}SF$\uparrow$
& F$\downarrow$ & R$\uparrow$ & $D_{\mathrm{mah}}\downarrow$ & Non\text{-}SF$\uparrow$
& F$\downarrow$ & R$\uparrow$ & $D_{\mathrm{mah}}\downarrow$ & Non\text{-}SF$\uparrow$
& F$\downarrow$ & R$\uparrow$ & $D_{\mathrm{mah}}\downarrow$ & Non\text{-}SF$\uparrow$ \\
\midrule
Random Selection
& 0.569 & 0.844 & 10.856 & 0.040
& \gray{0.249} & \gray{0.487} & \gray{39.468} & \gray{0.987}
& 0.162 & 0.274 & 38.868 & 0.222 &
\gray{0.135} & \gray{0.202} & \gray{253.495} & \gray{0.683} \\
Embedding Sim.
& 0.236 & 0.485 & \bestbg{10.167} & 0.633
& 0.257 & 0.574 & 38.822 & \bestbg{0.990}
& \gray{0.092} & \gray{0.149} & \gray{39.764} & \gray{0.893} &
0.133 & 0.204 & \bestbg{252.630} & \bestbg{0.881} \\
BM25
& \gray{0.282} & \gray{0.460} & \gray{11.181} & \gray{0.573}
& \gray{0.263} & \gray{0.538} & \gray{40.234} & \gray{0.994}
& \gray{0.085} & \gray{0.150} & \gray{39.322} & \gray{0.940} &
\gray{0.135} & \gray{0.203} & \gray{253.276} & \gray{0.372} \\
Oracle Sampling
& 0.239 & 0.418 & 11.083 & \bestbg{0.874}
& 0.248 & 0.525 & \bestbg{38.629} & 0.985
& 0.103 & 0.207 & \bestbg{38.081} & \bestbg{0.982} & 0.132 & 0.196 & 254.341 & 0.674 \\
RASLIK\text{-}F
& \gray{0.290} & \gray{0.511} & \gray{10.660} & \gray{0.466}
& \gray{0.265} & \gray{0.561} & \gray{39.990} & \gray{0.974}
& \gray{0.086} & \gray{0.165} & \gray{38.783} & \gray{0.992} &
\gray{0.137} & \gray{0.201} & \gray{254.199} & \gray{0.647} \\
RASLIK
& 0.272 & 0.555 & \bestbg{9.813} & \bestbg{0.911}
& 0.246 & 0.572 & \bestbg{37.573} & \bestbg{0.994}
& 0.084 & 0.166 & \bestbg{38.622} & \bestbg{0.992} & 0.117 & 0.186 & \bestbg{253.884} &\bestbg{0.886 }\\
\bottomrule
\end{tabular}}
\end{subtable}

\vspace{0.75em}

\begin{subtable}[t]{\textwidth}
\centering
\caption{Virtual-Alpaca Dataset}
\resizebox{\textwidth}{!}{
\begin{tabular}{lcccccccccccc}
\toprule
& \multicolumn{6}{c}{\textbf{OLMo-2-1124-7B}} & \multicolumn{6}{c}{\textbf{Pythia-2.8B}} \\
\cmidrule(lr){2-7} \cmidrule(lr){8-13}
\textbf{Method}
& \multicolumn{3}{c}{GA\_GDR} & \multicolumn{3}{c}{GA\_KLR}
& \multicolumn{3}{c}{GA\_GDR} & \multicolumn{3}{c}{GA\_KLR} \\
\cmidrule(lr){2-4}\cmidrule(lr){5-7}\cmidrule(lr){8-10}\cmidrule(lr){11-13}
& F$\downarrow$ & R$\uparrow$ & $D_{\mathrm{mah}}\downarrow$
& F$\downarrow$ & R$\uparrow$ & $D_{\mathrm{mah}}\downarrow$
& F$\downarrow$ & R$\uparrow$ & $D_{\mathrm{mah}}\downarrow$
& F$\downarrow$ & R$\uparrow$ & $D_{\mathrm{mah}}\downarrow$ \\
\midrule
Random Selection
& 0.174 & 0.264 & 87.590
& \gray{0.149} & \gray{0.250} & \gray{92.907}
& 0.440 & 0.506 & \bestbg{54.346}
& 0.131 & 0.221 & 28.514 \\
Embedding Sim.
& 0.193 & 0.282 & 88.102
& \gray{0.145} & \gray{0.240} & \gray{93.062}
& 0.421 & 0.485 & 56.388
& \gray{0.134} & \gray{0.180} & \gray{30.040} \\
BM25
& \gray{0.188} & \gray{0.263} & \gray{89.380}
& 0.150 & 0.260 & 92.340
& 0.419 & 0.481 & 56.762
& \gray{0.186} & \gray{0.179} & \gray{30.189} \\
Oracle Sampling
& 0.201 & 0.299 & 87.546
& 0.149 & 0.257 & 92.417
& 0.080 & 0.468 & 56.113
& 0.138 & 0.229 & \bestbg{28.243} \\
RASLIK\text{-}F
& 0.199 & 0.299 & \bestbg{87.333}
& 0.150 & 0.277 & \bestbg{90.937}
& \gray{0.153} & \gray{0.470} & \gray{56.314}
& \gray{0.141} & \gray{0.204} & \gray{29.150} \\
RASLIK
& 0.176 & 0.272 & \bestbg{87.166}
& 0.139 & 0.251 & \bestbg{90.915}
& 0.098 & 0.476 & \bestbg{55.458}
& 0.160 & 0.247 & \bestbg{27.670} \\
\bottomrule
\end{tabular}}
\end{subtable}

\end{table}

\textbf{Evaluation metrics.}
We use four complementary metrics:
(1) \textbf{Forget / Retain rates}: mean ROUGE-L scores \citep{lin-2004-rouge} with Porter stemming.
(2) \textbf{Pareto optimality}: in the $(R\uparrow,F\downarrow)$ plane, a method is Pareto-optimal \citep{797969} if no other method attains lower $F$ and higher $R$ simultaneously; this identifies the best trade-offs.
(3) \textbf{Mahalanobis distance}: \citep{mahalanobis1936generalized} proximity to the ideal $\mu{=}(1,0)$ is $D_{\mathrm{mah}}(v)=\sqrt{(v-\mu)^\top\Sigma^{-1}(v-\mu)}$ with $v{=}(R,F)$ and $\Sigma$ the (regularized) covariance of all methods. Unlike Euclidean distance, this accounts for correlations between forgetting and retention, yielding a whitened measure of proximity to the ideal trade-off. Numerically, values may appear close, which does \emph{not} imply methods are equivalent: in the normalized space, small differences reflect consistent advantages along correlated dimensions. Hence, $D_{\mathrm{mah}}$ is most informative as a \emph{ranking} tool within each model–scenario block and in conjunction with Pareto optimality; absolute values are not intended for cross-block comparison.
(4) \textbf{Non-SF probability (Howdy only)}: a RoBERTa discriminator outputs $p_\theta(\text{non-sci-fi}\mid y_i)$ per response; we report $\mathrm{Non\text{-}SF}=\tfrac{1}{N}\sum_{i=1}^{N}p_\theta(\text{non-sci-fi}\mid y_i)$ (higher means fewer sci-fi cues). Figure~\ref{fig:howdy-fig} provides a qualitative contrast (sci-fi vs.\ non-sci-fi outputs), and Table~\ref{tab:howdy-pretrain} gives pretrained baselines (low on target prompts, $\approx 1.0$ on normal prompts) before unlearning. Details are provided in Appendix~\ref{app:discriminator}.

\textbf{\textsc{RASLIK} achieves a strong Pareto trade-off.} In the eight blocks (two models $\times$ two algorithms $\times$ two datasets), \textsc{RASLIK} sits on the $(R\uparrow,F\downarrow)$ Pareto frontier and typically pushes it outward relative to BM25, embedding similarity, and oracle sampling. On Howdy-Alpaca, \textsc{RASLIK} is frontier in both GA\_GDR and GA\_KLR and attains top-or-near-top \emph{Non\text{-}SF}, indicating effective suppression of sci-fi style in addition to ROUGE-based gains. On Virtual-Alpaca, \textsc{RASLIK} ranks among the two lowest Mahalanobis distances across all four blocks, indicating robust overall closeness to the ideal point. Overall, \textsc{RASLIK} improves retention without disproportionate increases in forgetting and ranks at or near the best by $D_{\mathrm{mah}}$ across settings.

\textbf{RASLIK performs robustly across unlearning scenarios and algorithms.} The advantage of \textsc{RASLIK} persists in both trigger-based (Howdy) and domain-specific (Virtual) forgetting, under GA\_GDR and GA\_KLR, and for OLMo-2-1124-7B and Pythia-2.8B. In each block it remains Pareto-optimal and achieves equal-or-better $D_{\mathrm{mah}}$ than deterministic baselines. The ablation \textsc{RASLIK\,-F} (randomizing only the forget side) consistently ranks behind \textsc{RASLIK}, highlighting that retain-set selection matters.

\subsection{Ablation on retrieval randomness}

Table~\ref{tab:combined-forgetting} showed that \textsc{RASLIK}, a paired randomized retrieval mechanism, improves the forgetting–retention Pareto trade-off over standard baselines across models and unlearning algorithms. To examine \emph{why} retrieval-time stochasticity helps, we introduce a controlled ablation that varies only the level of randomness on a strong deterministic baseline (Oracle).

\textbf{Experimental setup.} We construct a family of \textbf{CR-$x$} (Controlled Randomization) variants as mixtures with proportion $\alpha{=}x\%$ from Oracle and $1{-}\alpha$ from uniformly sampled non-target candidates (without replacement), keeping the forget-set size unchanged; the candidate pool, set cardinality, optimization schedule, initialization, and all downstream unlearning hyperparameters and checkpoints are identical across conditions. We fix the retain set to the Oracle set. 

\FloatBarrier
\begin{table}[!htbp]
\centering
\caption{\textbf{Effect of retrieval randomness on Howdy-Alpaca with Pythia-2.8B.}
Methods RASLIK, Random Selection, and Oracle Sampling are as defined in Table~\ref{tab:combined-forgetting}. Columns report F$\downarrow$, R$\uparrow$, $D_{\mathrm{mah}}\downarrow$, and Non-SF$\uparrow$.}
\small
\setlength{\tabcolsep}{4.5pt}
\begin{tabular}{lcccccccc}
\toprule
& \multicolumn{4}{c}{\textbf{GA\_GDR}} & \multicolumn{4}{c}{\textbf{GA\_KLR}} \\
\cmidrule(lr){2-5}\cmidrule(lr){6-9}
\textbf{Method} & F$\downarrow$ & R$\uparrow$ & $D_{\mathrm{mah}}\downarrow$ & Non-SF$\uparrow$
              & F$\downarrow$ & R$\uparrow$ & $D_{\mathrm{mah}}\downarrow$ & Non-SF$\uparrow$ \\
\midrule
\textbf{Oracle Sampling} & \gray{0.084} & \gray{0.147} & \gray{56.331} & \gray{0.995} & 0.118 & 0.187 & 107.746 & 0.668 \\
\textbf{Random Selection}        & \gray{0.142} & \gray{0.202} & \gray{56.874} & \gray{0.449} & 0.112 & 0.176 & 107.788 & 0.739 \\
\textbf{RASLIK}        & 0.089 & 0.174 & \bestbg{54.989} & \bestbg{0.996} & 0.116 & 0.184 & \bestbg{107.544} & \bestbg{0.897} \\
\midrule
\textbf{CR-25} & \gray{0.081} & \gray{0.133} & \gray{56.936} & \gray{0.988} & \gray{0.107} & \gray{0.158} & \gray{108.766} & \gray{0.833} \\
\textbf{CR-35} & 0.075 & 0.128 & 56.851 & \bestbg{0.994} & 0.106 & 0.161 & 108.236 & 0.892 \\
\textbf{CR-45} & \gray{0.090} & \gray{0.156} & \gray{56.181} & \gray{0.993} & \gray{0.100} & \gray{0.144} & \gray{108.861} & \gray{0.688} \\
\textbf{CR-50} & 0.079 & 0.149 & 55.873 & 0.988 & 0.093 & 0.149 & \bestbg{107.207} & 0.884 \\
\textbf{CR-62} & 0.124 & 0.211 & \bestbg{55.139} & 0.955 & 0.089 & 0.131 & 108.304 & \bestbg{0.905} \\
\textbf{CR-75} & 0.102 & 0.177 & 55.704 & 0.981 & 0.099 & 0.151 & 107.962 & 0.865 \\
\bottomrule
\label{tab:combined-forgetting2}
\end{tabular}
\vspace{-6mm}
\end{table}
\FloatBarrier

\paragraph{Results and takeaways.}
Table~\ref{tab:combined-forgetting2} shows a consistent pattern as forget-side noise varies.
Under \textbf{GA\_GDR}, several CR-$x$ settings move closer to the ideal than Oracle Sampling
(e.g., \textbf{CR-62} has a smaller $D_{\mathrm{mah}}$ with comparable $F$), and \textbf{CR-35} yields the highest Non-SF.
Under \textbf{GA\_KLR}, moderate noise again helps: \textbf{CR-50} attains the lowest $D_{\mathrm{mah}}$ and lowers $F$ at similar $R$ to Oracle, very small and very large noise mostly trade one metric for the other, whereas a middle setting
(\textbf{CR-50}) improves both $F$ and $R$ and reduces $D_{\mathrm{mah}}$.
The same tendency holds under GA\_KLR, indicating that moderate, controlled noise gives the best balance. Across both algorithms, \textsc{RASLIK} stays on the Pareto frontier and matches or surpasses the best CR-$x$ settings in $D_{\mathrm{mah}}$ and Non-SF, indicating that structured, paired noisy retrieval provides a more reliable improvement than unstructured mixing.
In sum, (i) noisy retrieval can help, since moderate CR-$x$ improves the $(R\uparrow,F\downarrow)$ balance over a deterministic oracle; and (ii) the way noise is injected matters, since \textsc{RASLIK} yields more robust gains across algorithms than merely increasing random replacement.

%% file: related.tex
\section{Related Works}

\textbf{Approaches for LLM unlearning.}
Current LLM unlearning approaches focus on designing optimizers.
Gradient Ascent (GA) and its variants with Gradient Descent Regularization (GDR) and KL Regularization (KLR) \citep{jang2022knowledgeunlearningmitigatingprivacy,pmlr-v199-liu22a,yao2024machineunlearningpretrainedlarge} aim to forget undesired data by maximizing the loss on the forget set. 
Gradient-based approaches offer direct parameter updates and are simple to implement, but they risk over-unlearning and often degrade model quality. 
Preference-based methods such as Negative Preference Optimization (NPO) \citep{zhang2024negativepreferenceoptimizationcatastrophic} attempt to improve stability by treating forget sets as negative preferences. 
However, NPO sometimes showed degraded unlearning quality and incurred significant computational overhead ~\citep{fan2024simplicity}, so we do not adopt it in our framework. 
Reinforcement learning methods such as QUARK and DeMem \citep{NEURIPS2022_b125999b,kassem-etal-2023-preserving} introduce controllability into forgetting, while representation-level editing (RMU) \citep{li2024wmdpbenchmarkmeasuringreducing} and its adaptive extensions \citep{huutien2025effectssteeringlatentrepresentation}, along with attribution-based methods like WAGLE \citep{NEURIPS2024_649ad92e}, Needle \citep{hong2025intrinsictestunlearningusing}, and mechanistic unlearning \citep{guo2024mechanisticunlearningrobustknowledge}, directly suppress memorized knowledge in hidden states or specific neurons.  
Auxiliary strategies such as task vectors \citep{ilharco2023editingmodelstaskarithmetic,gao2024ethosrectifyinglanguagemodels,liu2024saferlargelanguagemodels}, contrastive decoding (ULD) \citep{NEURIPS2024_171291d8}, knowledge distillation \citep{wang2024rkldreversekldivergencebasedknowledge,dong2024undialselfdistillationadjustedlogits}, prompt engineering and embedding corruption \citep{NEURIPS2024_d6359156}, and in-context unlearning \citep{pawelczyk2024incontextunlearninglanguagemodels} further broaden the landscape of forgetting mechanisms.  
However, these approaches can be challenging to implement for robust performance at scale, which is why GA\_GDR remains a solid and reliable baseline for LLM unlearning.   Beyond optimizer-centric approaches, recent work has also revisited the
\emph{problem formulation} of machine unlearning. TARF~\citep{zhu2024decouplingclasslabeltarget} introduces a decoupling framework that 
separates the class label from the target concept, showing that effective 
Unlearning is still feasible even when the forgetting signal is only 
partially accessible rather than explicitly provided. 
Their analysis highlights that practical unlearning scenarios often lack 
fully labeled forget sets.
Meanwhile, evaluation benchmarks have become essential: TOFU \citep{maini2024tofutaskfictitiousunlearning}, RWKU~\citep{jin2024rwku}, and MUSE \citep{shi2024musemachineunlearningsixway} benchmarks extended evaluation to multiple dimensions, including memorization, privacy, and scalability.

\textbf{Influential data retrieval.}
Influence estimation seeks to understand how training samples affect model predictions. 
Classical approaches like Influence Functions~\citep{pmlr-v70-koh17a} approximate the effect of removing a sample via second-order information, but are fragile on deep networks~\citep{basu2021influencefunctionsdeeplearning} and computationally expensive. 
Trace-based methods such as TracIn~\citep{NEURIPS2020_e6385d39} partially mitigate this by tracking loss across checkpoints, yet require storing many snapshots and still do not scale to LLMs. 
Shapley-value-based data valuation methods (e.g., representer points~\citep{yeh2018representerpointselectionexplaining}, integrated gradients~\citep{sundararajan2017axiomaticattributiondeepnetworks}, SHAP~\citep{lundberg2017unifiedapproachinterpretingmodel}, LIME~\citep{ribeiro2016whyitrustyou}, and data Shapley~\citep{ghorbani2019datashapleyequitablevaluation, jia2020efficienttaskspecificdatavaluation}) provide principled interpretability, but are even less scalable in large-scale unlearning settings. 
Recent advances address scalability for large models. 
DataInf~\citep{kwon2024datainfefficientlyestimatingdata} enables efficient estimation under LoRA fine-tuning, while RapidIn~\citep{rapidin} introduces token-wise gradient compression for multi-GPU influence retrieval. 
Alinfik~\citep{alinfik} further approximates future influence kernels for efficient large-scale data valuation. TracVC~\citep{tracvc} combines retrieval with gradient-similarity-based influence estimation to trace LLM verbalized confidence back to influential training samples, further illustrating a broader retrieval-centric view of training-data influence. 
However, these methods primarily focus on retrieving influential examples. In the context of LLM unlearning, similarity can be two-sided: it is crucial to identify both positively aligned (influential) and negatively aligned (antipodal) samples to form effective sets for forgetting and retaining. 
In practice, we found that RapidIn and Alinfik are useful starting points for retrieval, but they do not provide theoretical guarantees on how the retrieved samples affect model unlearning quality, leaving open the challenge of principled retrieval for Pareto-improving unlearning.

%% file: conclusion.tex
\section{Conclusion}
This work reframes LLM unlearning as a problem of data efficiency rather than purely one of optimization. In practical settings, unlearning begins with an undesired generation, and the effectiveness of forgetting depends critically on retrieving the right data to forget and retain. We introduced the concept of \emph{data Pareto improvement}, which characterizes how retrieval quality directly determines the achievable trade-offs between forgetting and retention. 
To operationalize this principle, we developed \emph{RASLIK}, a randomized antipodal search method on linearized influence kernels. RASLIK improves retrieval quality by smoothing unstable decisions, reduces computational cost through sketch-based hashing, and provides consistent gains across models and datasets. Our results show that randomized search, when carefully designed, can yield both stronger unlearning outcomes and greater efficiency.

\section*{Acknowledgments}

We gratefully acknowledge the support of Lambda, Inc. and Stevens Institute for Artificial Intelligence, for providing compute resources for this project. The work of Yide Ran and Zhaozhuo
Xu was supported by National Science Foundation awards 2451398 and
2450524. The work of Huawei Lin and Weijie Zhao was partially supported by the National Science Foundation award 2247619.

%% file: appendix.tex
\section*{Appendix}

\section{Usage of Large Language Models}
In this work, large language models were used only for minor textual refinements, such as paraphrasing technical descriptions and improving fluency. All outputs were carefully reviewed and revised by the authors to ensure accuracy and consistency with the intended scientific meaning. The intellectual contributions, methodological advances, and scientific insights are entirely original and author-driven.

\section{Theoretical Analysis with Proofs}\label{app:proof}

\begin{theorem}[Variance Reduction of GA\_GDR with RASLIK, formal version of Theorem~\ref{thm:var:informal}]\label{thm:var:formal}
Let $\Delta_{\mathrm{ex}}$ be the update direction obtained by retrieving forget set $\mathcal{F}$ and retain set $\mathcal{R}$ using thresholding on exact linearized influence kernel (see Definition~\ref{def:lik}) $\rho_x=\cos(q_y,g_x)$,  
and $\Delta_{\mathrm{ra}}$ be the update direction obtained by retrieving forget set $\mathcal{F}$ and retain set $\mathcal{R}$ using RASLIK in Algorithm~\ref{alg:raslik} with scores $\widehat{\rho}_x=\langle h(q_y),h(g_x)\rangle$.  
Under Assumption~\ref{ass:margin},
\[
\mathrm{Var}[\Delta_{\mathrm{ra}}] \;\le\; \mathrm{Var}[\Delta_{\mathrm{ex}}] \;-\; \frac{c}{k}\,\Lambda,
\]
for some $c>0$ and boundary mass $\Lambda>0$. Moreover,
\[
\mathbb{E}\!\left[\|\Delta_{\mathrm{ra}}-\nabla_\theta U(\theta)\|_2^2\right]
<
\mathbb{E}\!\left[\|\Delta_{\mathrm{ex}}-\nabla_\theta U(\theta)\|_2^2\right].
\]
\end{theorem}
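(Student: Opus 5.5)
We will work with a per-item decomposition of the update direction. Write $\Delta = \sum_{x\in X} \big( \tfrac{1}{|\mathcal{R}|}\mathbf{1}[x\in\mathcal{R}] - \tfrac{1}{|\mathcal{F}|}\mathbf{1}[x\in\mathcal{F}]\big) g_x$. We treat the set sizes as fixed normalizers, which is justified by selecting thresholds as empirical quantiles as in the ``suggested thresholds'' paragraph. Then all randomness enters through the membership indicators.

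For the exact rule these indicators are $a_x=\mathbf{1}[\rho_x\ge\tau_F]$ and $b_x=\mathbf{1}[-\rho_x\ge\tau_R]$. Here $\rho_x$ fluctuates across GA\_GDR iterations via $q_y\mapsto q_y+\xi$, as in Assumption~\ref{ass:margin}. For RASLIK they are $\hat a_x=\mathbf{1}[\widehat\rho_x\ge\tau_F]$ and $\hat b_x=\mathbf{1}[-\widehat\rho_x\ge\tau_R]$. We use that $\widehat\rho_x=\rho_x+\eta_x$, with $\eta_x$ unbiased and of variance $\sigma_k^2=\Theta(1/k)$; this is the unbiasedness and variance property stated for the sketch. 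The antipodal identity $h(-q_y)=-h(q_y)$ shows that the same $\eta_x$ drives both $\hat a_x$ and $\hat b_x$.

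The first step is to localize. For $x\notin\mathcal{N}_F\cup\mathcal{N}_R$, the margin $\Gamma>\gamma$ together with a sub-Gaussian tail for the Rademacher sketch (Hoeffding applied to the projections, then normalization) shows the indicators agree with their deterministic values except on an event of probability $e^{-c'k\Gamma^2}$. These terms therefore contribute the same variance to both estimators up to an exponentially small error, which we absorb.

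The second step, the heart of the argument, is a smoothing computation on the boundary sets. Conditioned on the query fluctuation $\xi$, the RASLIK indicator has conditional mean $p_x(\xi)=\Pr_\eta[\rho_x(\xi)+\eta_x\ge\tau_F]=\Phi_k\big((\rho_x(\xi)-\tau_F)/\sigma_k\big)$. By the law of total variance, the iteration-to-iteration variance of $\Delta_{\mathrm{ra}}$ is then governed by the variance of the smoothed function $p_x(\xi)$ rather than the step $\mathbf{1}[\rho_x(\xi)\ge\tau_F]$. Since $\xi$ is zero-mean and $\rho_x$ is within $\gamma$ of the threshold, the hard step flips with probability bounded below. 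Its Bernoulli variance exceeds that of the smoothed version by an amount we lower bound, to first order in $\sigma_k^2$, by $c\,\sigma_k^2\,\phi(\cdot)$ evaluated on the boundary. Summing $\|g_x\|^2/|\mathcal{F}|^2$ (respectively $/|\mathcal{R}|^2$) over $\mathcal{N}_F\cup\mathcal{N}_R$ defines $\Lambda>0$, which is positive because these sets have nonzero measure. This yields $\mathrm{Var}[\Delta_{\mathrm{ra}}]\le\mathrm{Var}[\Delta_{\mathrm{ex}}]-\tfrac{c}{k}\Lambda$.

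We expect this step to be the main obstacle. The randomized estimator also adds its own conditional variance $\mathbb{E}[p_x(1-p_x)]$, and it is the variance across iterations, not the total variance, that the theorem needs to compare. So we must either interpret $\mathrm{Var}$ as variance of the iteration-level (sketch-averaged) direction, or argue that the smoothing gain dominates. We would try the first, since it matches the paper's framing of fluctuations ``across GA\_GDR iterations''.

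Cross terms between distinct items cancel in the comparison if the $\eta_x$ are treated as independent across $x$. The boundary sets $\mathcal{N}_F$ and $\mathcal{N}_R$ are disjoint whenever $\tau_F+\tau_R>2\gamma$, so no forget–retain coupling arises.

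The final step is the MSE claim, via a bias–variance decomposition $\mathbb{E}\|\Delta-\nabla_\theta U\|^2=\|\mathbb{E}\Delta-\nabla_\theta U\|^2+\mathrm{Var}[\Delta]$. By symmetry of $\eta_x$ around $0$ and the zero-mean fluctuation, the smoothed membership probability $p_x$ differs from the exact one by $O(\sigma_k^2)$ times a curvature term. Only first-order variance matters, so with the threshold correction $z_{1-\delta}\widehat\sigma_k$ the bias change can be made $o(1/k)$. The strict variance gain $\tfrac{c}{k}\Lambda$ then dominates, giving the strict MSE inequality.
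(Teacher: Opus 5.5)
Your route is the paper's own: noise in $\widehat\rho_x$ turns membership into a step convolved with noise, then the law of total variance, a boundary-mass constant, and bias--variance for the MSE. But the step you flag as the main obstacle is a genuine gap, and neither of your two options closes it.

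\emph{The smoothing gain cannot dominate.} In your own model (fixed normalizers, $\eta_x$ independent of $\xi$), each membership indicator is Bernoulli, so its total variance depends only on its mean. Independent symmetric noise pushes that mean toward $1/2$. Take $\rho_x(\xi)-\tau_F=u_x+\epsilon$ with $\epsilon\sim N(0,s^2)$ and $\eta_x\sim N(0,\sigma_k^2)$. Then $\mathrm{Var}[a_x]=\Phi(u_x/s)\bigl(1-\Phi(u_x/s)\bigr)$, while $\mathrm{Var}[\hat a_x]=\Phi(u_x/t)\bigl(1-\Phi(u_x/t)\bigr)$ with $t=\sqrt{s^2+\sigma_k^2}>s$, which is strictly larger for every $u_x\neq 0$. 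A single such boundary item, with all other items at margin $\Gamma$, already contradicts the inequality. So the added term $\mathbb{E}[p_x(1-p_x)]$ always wins.

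\emph{The reinterpretation changes the theorem.} Reading $\mathrm{Var}$ as the variance of $\mathbb{E}_{\mathrm{sketch}}[\Delta_{\mathrm{ra}}\mid\xi]$ concerns a direction the algorithm never computes. Algorithm~\ref{alg:raslik} samples $\{r_j\}$ once, and for a fixed sketch $\hat a_x(\xi)$ is again a hard step in $\xi$, so the direction actually used is not smoothed at all.

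\emph{Even for the sketch-averaged quantity, your per-item argument fails.} Assumption~\ref{ass:margin} does not make the step flip with probability bounded below. If the fluctuation never reaches the threshold, the exact selection variance is $0$ and no strict reduction is possible. For an item whose fluctuation stays on one side of $\tau_F$, the exact indicator is constant while $p_x(\xi)$ still varies, so smoothing \emph{raises} its variance. A net gain appears only after integrating over items with a density $m_0>0$ of $\rho_x$ at the threshold. In the Gaussian model that gain is $m_0(s+\sigma_k-t)/\sqrt{\pi}\asymp m_0\min(s,\sigma_k)$. It is not a Taylor term in $\sigma_k^2$, because the step is not differentiable. Hence $\Lambda$ must be built from a density at the thresholds plus a minimum fluctuation scale, not from $\sum_{x\in\mathcal{N}_F\cup\mathcal{N}_R}\|g_x\|^2/|\mathcal{F}|^2$. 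Nor do the cross terms cancel: the $\eta_x$ share the $r_j$ and $h(q_y)$, and the common $\xi$ couples items differently under the two rules.

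The MSE step has a separate error. Write $b_{\mathrm{ex}}$ for the bias of $\Delta_{\mathrm{ex}}$ and $\delta$ for the change in bias. Shifting the thresholds by $z_{1-\delta}\widehat\sigma_k=\Theta(k^{-1/2})$ moves each boundary item's inclusion probability by $\Theta(k^{-1/2})$, so it \emph{creates} a first-order $\delta$. Then $\|\delta\|^2$ is of order $1/k$ and the cross term $2\langle b_{\mathrm{ex}},\delta\rangle$ is up to order $k^{-1/2}$. Neither is $o(1/k)$, and both compete with the gain $c\Lambda/k$ you claim. Without the shift, symmetric noise gives $\delta=O(\sigma_k^2)$ only if $\rho_x(\xi)$ has a smooth density at the threshold. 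Even then $2\langle b_{\mathrm{ex}},\delta\rangle$ is $O(1/k)$ unless $\Delta_{\mathrm{ex}}$ is itself unbiased for the target, so strictness would need explicit constants.

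The paper's proof does not supply the missing ingredient either. Its Step~5 simply asserts that the selection variance drops by $(c_S/k)\Lambda_S$ with ``comparable'' within-set terms. Its Step~6 infers that $\Delta_{\mathrm{ra}}$ is unbiased from the unbiasedness of $\widehat\rho_x$, which thresholding breaks. A correct argument along your lines has to restate the theorem with three ingredients: variance of the sketch-averaged direction, a density lower bound and fluctuation scale at the thresholds, and an explicit bias bound.
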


\begin{proof}
\textbf{Step 1 (Setup).}  
For each $x\in X$, define $\rho_x=\cos(q_y,g_x)$ and $\widehat{\rho}_x=\langle h(q_y),h(g_x)\rangle$.  
By construction of $h(\cdot)$, $\mathbb{E}[\widehat{\rho}_x]=\rho_x$ and $\mathrm{Var}[\widehat{\rho}_x]=\mathcal{O}(1/k)$.

\textbf{Step 2 (Selection rules).}  
Exact thresholding uses $I^{\mathrm{ex}}_{x,F}=\mathbf{1}\{\rho_x\ge\tau_F\}$ and $I^{\mathrm{ex}}_{x,R}=\mathbf{1}\{\rho_x\le-\tau_R\}$.  
RASLIK thresholding uses $I^{\mathrm{ra}}_{x,F}=\mathbf{1}\{\widehat{\rho}_x\ge\tau_F\}$ and $I^{\mathrm{ra}}_{x,R}=\mathbf{1}\{\widehat{\rho}_x\le-\tau_R\}$.

\textbf{Step 3 (Instability of exact thresholding).}  
The indicator $\mathbf{1}\{\rho_x\ge\tau_F\}$ is discontinuous at $\tau_F$.  
Under Assumption~\ref{ass:margin}, items in $\mathcal{N}_F$ (and analogously $\mathcal{N}_R$) experience membership flips under small fluctuations of $\rho_x$, contributing substantially to selection variance.

\textbf{Step 4 (RASLIK smoothing).}  
RASLIK replaces $\rho_x$ by $\widehat{\rho}_x=\rho_x+\varepsilon_x$ with $\mathbb{E}[\varepsilon_x]=0$, $\mathrm{Var}[\varepsilon_x]=\mathcal{O}(1/k)$.  
Hence $p_x^{\mathrm{ra}}:=\mathbb{P}(I^{\mathrm{ra}}_{x,F}=1\mid \rho_x)=\mathbb{P}(\rho_x+\varepsilon_x\ge\tau_F)$ is the convolution of a step with a continuous noise distribution.  
Therefore $p_x^{\mathrm{ra}}$ is $L_k$-Lipschitz in $\rho_x$ with $L_k=\mathcal{O}(1/\sqrt{k})$, which strictly reduces selection sensitivity in $\mathcal{N}_F\cup\mathcal{N}_R$.

\textbf{Step 5 (Variance reduction for updates).}  
Let $\mu_F=\tfrac{1}{|\mathcal{F}|}\sum_x I_{x,F} g_x$ and $\mu_R=\tfrac{1}{|\mathcal{R}|}\sum_x I_{x,R} g_x$.  
By the law of total variance,
\[
\mathrm{Var}[\mu_S]=\mathbb{E}\!\left[\mathrm{Var}[\mu_S\mid \mathbf{I}_S]\right]
+\mathrm{Var}\!\left[\mathbb{E}[\mu_S\mid \mathbf{I}_S]\right],\qquad S\in\{F,R\}.
\]
The within-set variance terms are comparable across methods; the \emph{selection variance} terms are strictly smaller under RASLIK by at least $(c_S/k)\Lambda_S$, with $\Lambda_S>0$ proportional to the boundary mass of $\mathcal{N}_S$ and bounded second moments of $\{g_x\}$.  
Combining $S=F,R$ and controlling cross-covariances yields
\[
\mathrm{Var}[\Delta_{\mathrm{ra}}]\le \mathrm{Var}[\Delta_{\mathrm{ex}}]-\tfrac{c}{k}\Lambda,
\]
with $c=\min\{c_F,c_R\}>0$ and $\Lambda=\Lambda_F+\Lambda_R>0$.

\textbf{Step 6 (MSE improvement).}  
Since $\widehat{\rho}_x$ is unbiased and $h(-q_y)=-h(q_y)$ preserves antipodal unbiasedness, $\Delta_{\mathrm{ra}}$ is unbiased for $\nabla_\theta U(\theta)$.  
Therefore its mean-squared error equals its variance and is strictly smaller than that of $\Delta_{\mathrm{ex}}$.
\end{proof}

\subsection{Connection to empirical experiment}
\label{app:assump32-validation}

We empirically validate Assumption~3.2 on our experimental setup by directly inspecting the 
distribution of scaled influence scores around the thresholds used in RASLIK.

We first compute the scaled influence scores $s_x' \in [-1,1]$, which approximate the cosine 
similarities $\rho_x = \cos(q_y, g_x)$. Using the empirically selected thresholds $\tau_F$ and 
$-\tau_R$, we then examine the density of training samples in their $\gamma$-neighborhoods.

We visualize this in the plots below:

\begin{figure}[h]
    \centering

    \begin{subfigure}{0.48\textwidth}
        \centering
        \includegraphics[width=\textwidth]{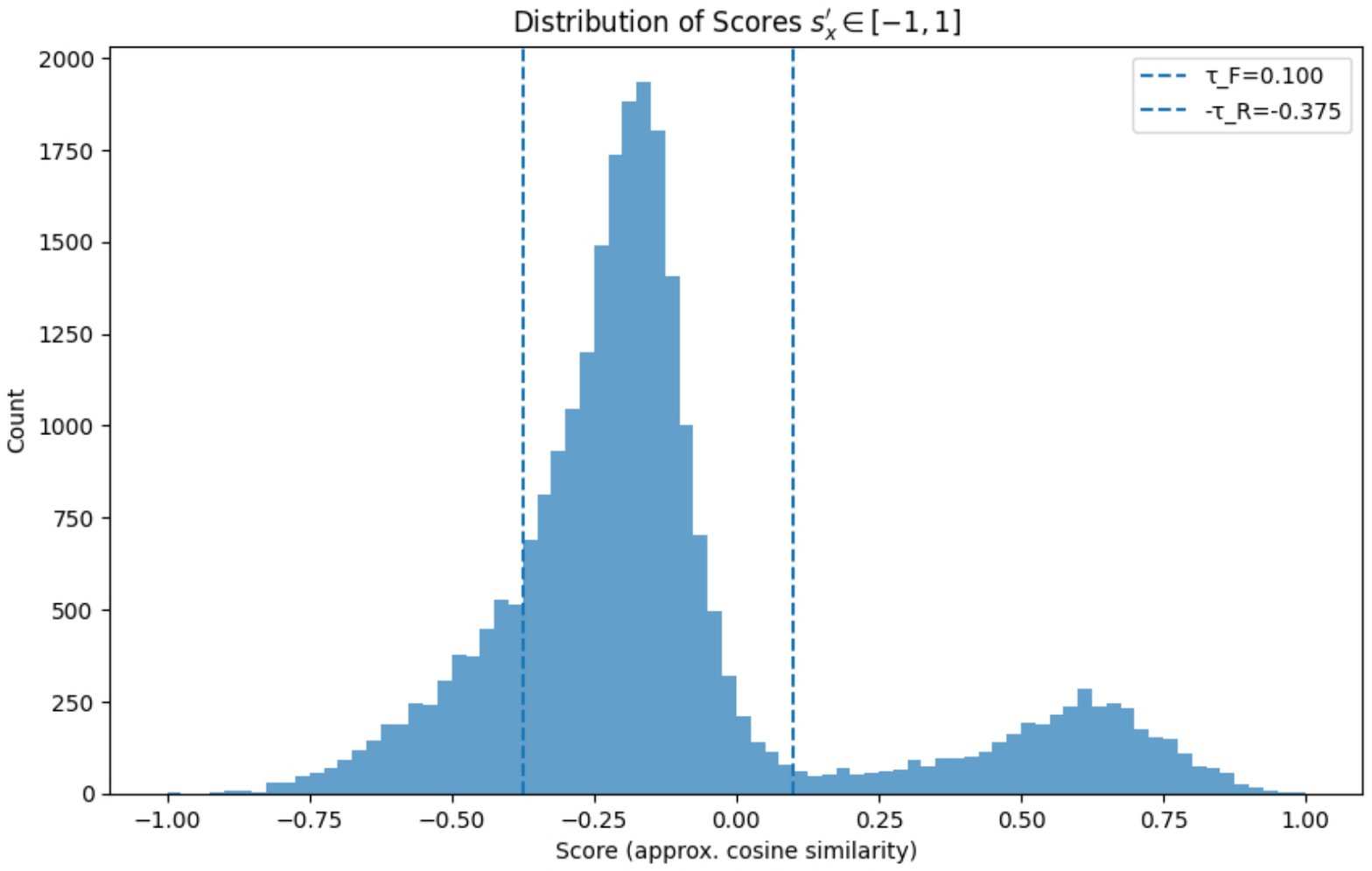}
    \end{subfigure}

    \vspace{0.8em}

    \begin{subfigure}{0.48\textwidth}
        \centering
        \includegraphics[width=\textwidth]{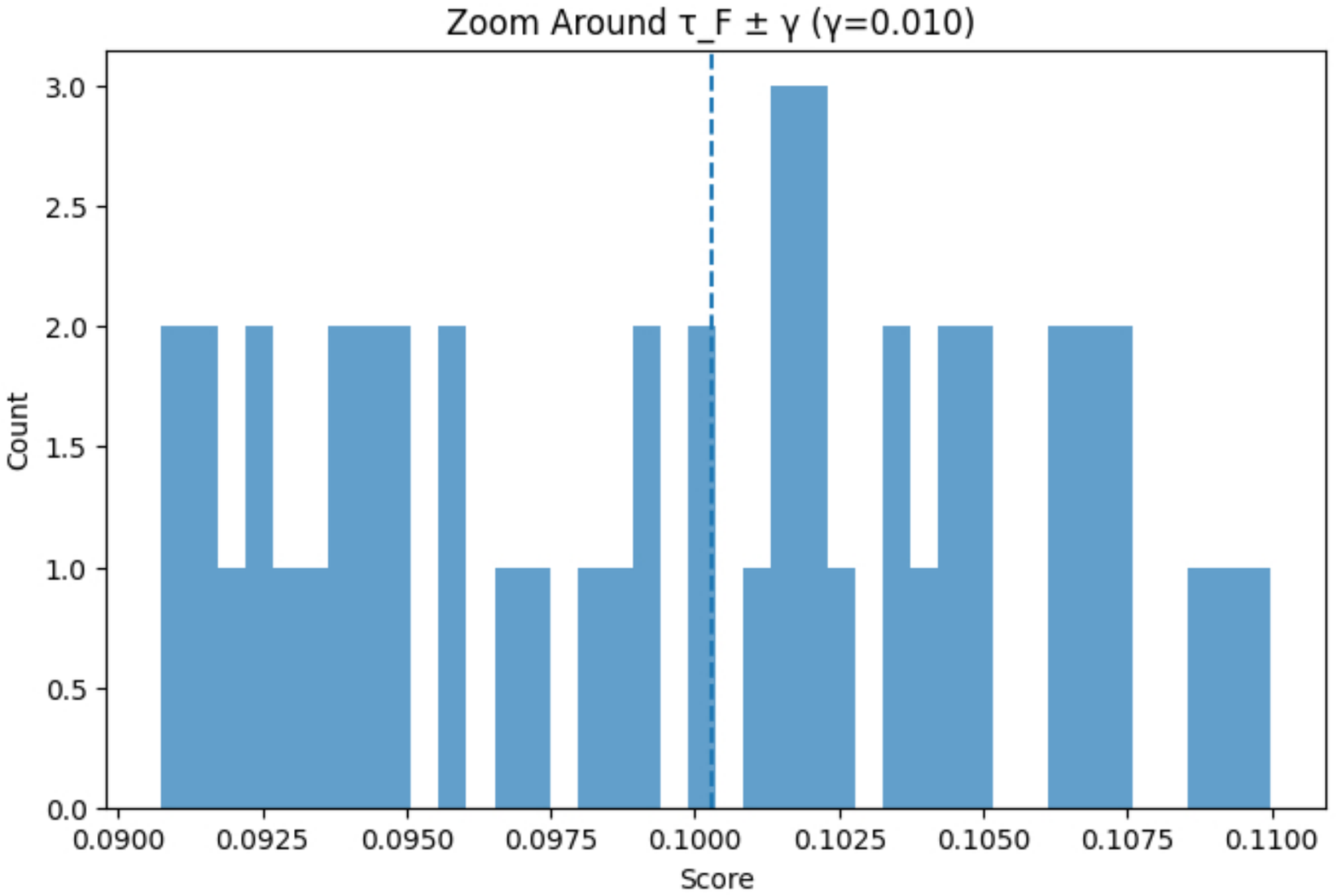}
    \end{subfigure}
    \hfill
    \begin{subfigure}{0.48\textwidth}
        \centering
        \includegraphics[width=\textwidth]{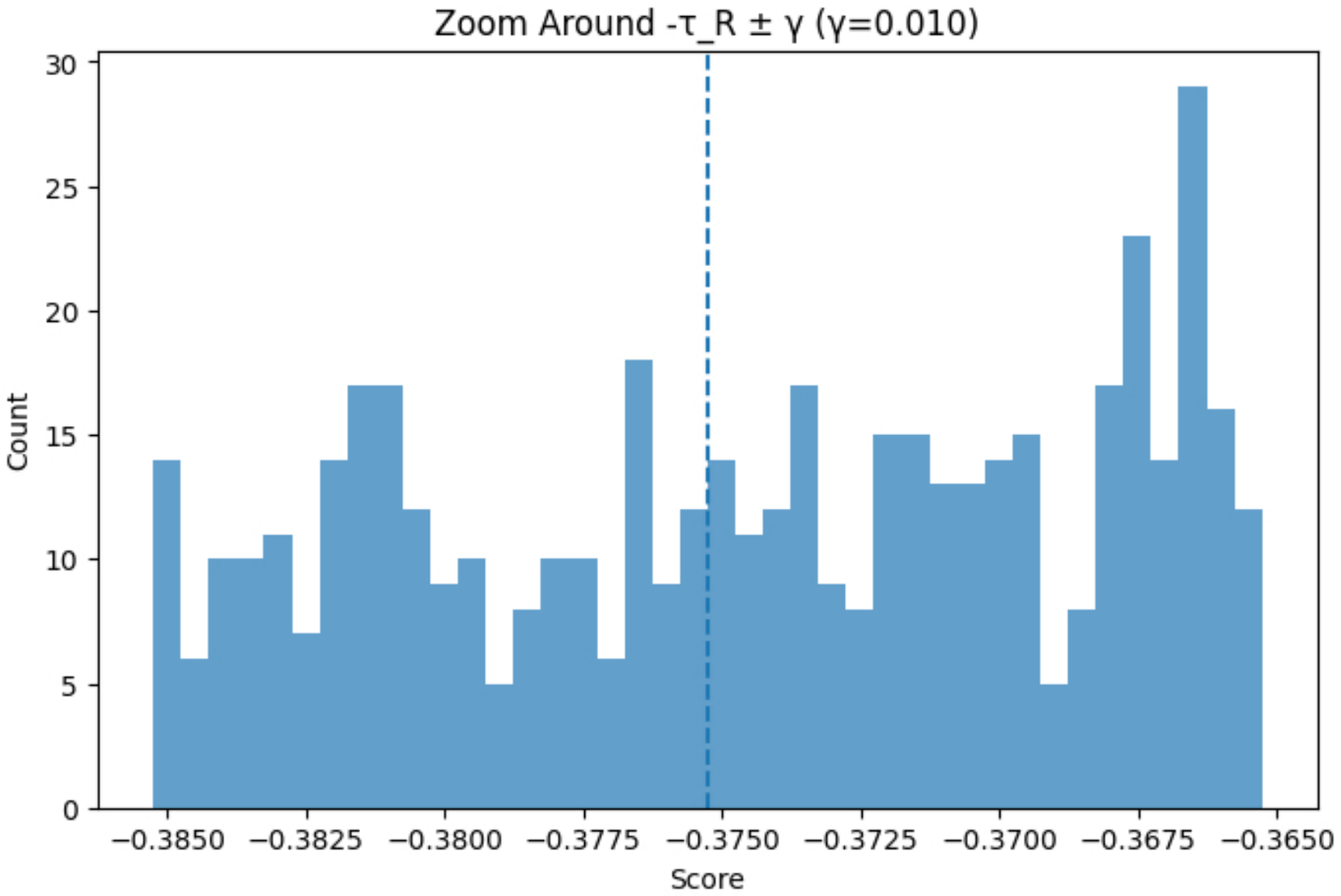}
    \end{subfigure}

    \caption{Visualization of scaled influence scores: (top) global score distribution; 
    (bottom left) zoom around the forget threshold $\tau_F$; (bottom right) zoom around the 
    retain threshold $-\tau_R$. All histograms use $\gamma = 0.01$.}
    \label{fig:assump32}
\end{figure}

For $\gamma = 0.01$, we obtain the following \textbf{boundary statistics}:
\begin{itemize}
    \item \textbf{Boundary mass around $\tau_F$:} 49 samples within $\tau_F \pm 0.01$.
    \item \textbf{Boundary mass around $-\tau_R$:} 495 samples within $-\tau_R \pm 0.01$.
    \item \textbf{Total boundary mass:} $|N_F \cup N_R| = 544 > 0$, confirming that the boundary 
    sets have strictly positive measure $\Lambda > 0$.
\end{itemize}

To assess the \textbf{margin condition}, we compute the minimum distance from any non-boundary 
sample to either threshold. This yields
\[
\hat{\Gamma} = 0.0101 > \gamma,
\]
so all samples outside the boundary neighborhoods remain at least $\hat{\Gamma}$ away from the 
thresholds. This empirically verifies the required margin condition $\Gamma > \gamma$.

These statistics and histograms show that both parts of Assumption~3.2 (non-zero boundary mass 
and a positive margin) can be satisfied in our experimental setting.

\section{More Experimental Details}
\label{app:exp details}

\subsection{Fine-tuning Hyperparameters}
We fine-tune both models using \textbf{Low-Rank Adaptation (LoRA)} \citep{hu2021loralowrankadaptationlarge}. 
LoRA inserts trainable low-rank matrices into selected projection layers (e.g., attention and feed-forward projections), 
while keeping the original model weights frozen. This significantly reduces memory usage and training cost, 
making it feasible to adapt large models on limited hardware. The rank $r$ controls the size of the low-rank matrices, 
and the scaling factor $\alpha$ adjusts their contribution. 

Table~\ref{tab:finetune-configs} summarizes the configurations for OLMo-7B and Pythia-2.8B.
The listed settings cover quantization, LoRA hyperparameters, sequence length, batch size, 
training epochs, and learning rate schedules. 
\begin{table}[h]
\centering
\small
\caption{Fine-tuning configurations for OLMo-7B and Pythia-2.8B.}
\label{tab:finetune-configs}
\begin{tabular}{lcc}
\toprule
\textbf{Setting} & \textbf{OLMo-7B} & \textbf{Pythia-2.8B} \\
\midrule
Base model       & \texttt{allenai/OLMo-2-1124-7B} & \texttt{EleutherAI/pythia-2.8b} \\
Revision         & stage1-step928646 & step143000 \\
Quantization     & 8-bit & 4-bit (nf4, double quant) \\
LoRA rank $r$    & 8 & 16 \\
LoRA $\alpha$    & 32 & 32 \\
Dropout          & 0.05 & 0.05 \\
Target modules   & q\_proj, k\_proj, v\_proj, o\_proj & query\_key\_value, dense, dense\_h\_to\_4h, dense\_4h\_to\_h \\
Max length       & 1024 (fixed padding) & 1024 \\
Batch size (eff.)& 2 × 4 = 8 & 4 × 8 = 32 \\
Epochs           & 3 & 2 \\
Learning rate    & $1 \times 10^{-4}$ & $1.2 \times 10^{-4}$ (cosine, warmup 0.05) \\
Grad. checkpoint & Enabled & Enabled \\
\bottomrule
\end{tabular}
\end{table}

\subsection{Retrieval Method Settings}

\paragraph{Embedding Similarity}
We use the \texttt{BAAI/bge-base-en-v1.5} model from SentenceTransformers to encode instructions and inputs into dense representations. Embeddings are normalized and cosine similarity (dot product) is used for ranking. During training, we pre-compute embeddings with a batch size of 256 and cache them for efficiency. For each query, all training samples are ranked by similarity, and the final ranking score for each sample is obtained by averaging its ranks and similarity scores across all queries. 

\paragraph{BM25}
We implement a sparse retrieval baseline using the \texttt{rank\_bm25} library. Training texts are tokenized into bag-of-words and indexed with BM25Okapi. Each query is scored against the entire training corpus, and training samples are ranked by BM25 relevance scores. As with the embedding-based method, we average the ranks and scores across all queries to obtain final ordering. 
\paragraph{RASLIK}
\textbf{(1) Gradient Caching.} 
We construct a cache of per-example gradients on the training set. Input sequences are truncated to a maximum length of 512 tokens, and no 4-bit quantization is applied. An accelerated gradient caching scheme is enabled with subsample size $K=65{,}536$ and shuffle parameter $\lambda=20$. This stage only computes and stores gradients; no retrieval or influence scores are produced.
\textbf{(2) Retrieval.} 
Using the cached gradients, we perform influence-based retrieval. Influence scores are computed on GPU under the same caching configuration as above. Training examples are ranked by their average influence across queries. Model memory is released after retrieval to reduce resource usage.

\subsection{Unlearning Configurations}
We largely follow the default settings of the \textsc{Muse-Bench} framework \citep{shi2024musemachineunlearningsixway}, applying the same training pipeline across backbones. Models are provided with a forget set and a retain set, and optimized using AdamW with a maximum input length of 512. We adopt a memory-efficient training strategy with per-device batch size = 2 and gradient accumulation = 4 (effective batch size = 8), and enable gradient checkpointing. The only deviations from the defaults are the learning rates, where GA\_GDR uses $1\times 10^{-5}$ and GA\_KLR uses $3\times 10^{-5}$. For the \emph{Howdy–Alpaca} configuration, the forget set contains 5{,}000 items and the retain set 2{,}000 items; for the \emph{Virtual–Alpaca} configuration, both forget and retain sets contain 2{,}000 items. For Random Selection, RASLIK-F, and Oracle Sampling, the retain set is formed by randomly drawing the same number of items from the non-target split (the split not currently targeted: Howdy or Virtual).

\subsection{Efficiency of RASLIK}

We report the computational cost of our method in Table~\ref{tab:retrieval_time}, which shows the retrieval time required to compute the influence score of a single test query over the full Howdy dataset (52k instances).

Embedding-based methods such as \textsc{EmbeddingSim} and \textsc{BM25} are naturally fast because they operate in fixed-dimensional text spaces. In contrast, our method performs retrieval in the \emph{influence-function space}, where each example is represented by a gradient vector that reflects parameter-level sensitivity. This representation is far richer but also more expensive to compare. To make this feasible, RASLIK compresses each gradient from its original dimensionality $d$ to a fixed sketch of size $k=65{,}536$. This reduces both memory usage and retrieval complexity from $O(d)$ to $O(k)$, as summarized in Table~\ref{tab:compression}.

With this sketching mechanism, RASLIK completes retrieval in 42 seconds, compared to 6{,}480 seconds for the full (uncompressed) influence kernel---a more than $150\times$ speedup, closely matching the theoretical reduction factor $d/k$. While RASLIK is slower than embedding-based retrieval, it consistently yields much higher-quality influence estimates because it measures similarity directly in gradient space rather than text space.

Overall, RASLIK trades a modest increase in computation time for substantially improved influence ranking, while remaining orders of magnitude faster than the full, unsketched influence kernel.

\begin{table}[h!]
\centering
\caption{Retrieval time (seconds) per query on the full Howdy dataset (52k instances).}
\label{tab:retrieval_time}
\vspace{0.5em}
\begin{tabular}{lc}
\toprule
\textbf{Method} & \textbf{Retrieval Time (sec)} \\
\midrule
EmbeddingSim & 6 \\
BM25 & 8 \\
RASLIK ($k=65{,}536$) & 42 \\
Full RASLIK (no sketch) & 6480 \\
\bottomrule
\end{tabular}
\end{table}

\begin{table}[h!]
\centering
\caption{Dimensionality and memory reduction of RASLIK sketches.}
\label{tab:compression}
\vspace{0.5em}
\begin{tabular}{p{4.0cm} p{1.6cm} p{1.9cm} p{1.5cm} p{1.9cm} p{1.2cm}}
\toprule
\textbf{Model} & \textbf{Full Dim} & \textbf{Sketch Dim} &
\textbf{Full Mem} & \textbf{Sketch Mem} & \textbf{Comp.} \\
\midrule
OLMo-2-1124-7B w.\ LoRA & 8{,}388{,}608 & 65{,}536 & 32 MB & 0.25 MB & \textbf{128$\times$} \\
Pythia-2.8B w.\ LoRA     & 2{,}621{,}440 & 65{,}536 & 10 MB & 0.25 MB & \textbf{40$\times$}  \\
\bottomrule
\end{tabular}
\end{table}

\subsection{Experiments on TOFU Benchmark}\label{app:exp_tofu}

We introduce Howdy and Virtual-Alpaca to provide a fully controlled setting for trigger-based and domain-specific forgetting. To make the setup more comparable to existing unlearning benchmarks, we additionally evaluate our method on the \textbf{TOFU} \citep{maini2024tofutaskfictitiousunlearning} dataset, a widely used benchmark for unlearning factual attributes associated with specific authors. Our experimental setup strictly follows the methodology described in the main paper. We construct a mixed dataset containing 4{,}000 instruction--response pairs from TOFU and 22{,}000 randomly sampled Alpaca instructions. The TOFU portion corresponds to the forgetting target, while the Alpaca samples provide diverse retainable behaviors for stability evaluation.

We conduct experiments on \textbf{OLMo-2-1124-7B} and \textbf{Pythia-2.8B}, using Muse-Bench as the evaluation framework. Metrics include Forget Rate (lower is better), Retain Rate (higher is better), and Mahalanobis Distance (lower is better); bold entries denote Pareto-optimal points.

\begin{table}[h!]
\centering
\caption{Results on the TOFU dataset under GAGDR, using the OLMo-2-1124-7B model.}
\begin{tabular}{lccc}
\toprule
method & Forget Rate & Retain Rate & Mahal Dist \\
\midrule
BM25                & 0.83 & 0.81 & 14.04 \\
EmbeddingSim        & 0.54 & 0.76 & 8.72  \\
\textbf{OracleSampling}  & \textbf{0.42} & \textbf{0.76} & \textbf{6.67}  \\
\textbf{RandomSelection} & \textbf{0.79} & \textbf{0.86} & \textbf{13.43} \\
RASLIK-F            & 0.46 & 0.75 & 7.41  \\
\textbf{RASLIK}          & \textbf{0.49} & \textbf{0.78} & \textbf{7.96}  \\
\bottomrule
\end{tabular}
\end{table}

\begin{table}[h!]
\centering
\caption{Results on the TOFU dataset under GAKLR, using the OLMo-2-1124-7B model.}
\begin{tabular}{lccc}
\toprule
method & Forget Rate & Retain Rate & Mahal Dist \\
\midrule
BM25            & 0.45 & 0.46 & 48.54 \\
EmbeddingSim    & 0.28 & 0.43 & 33.40 \\
OracleSampling  & 0.28 & 0.42 & 33.33 \\
RandomSelection & 0.51 & 0.42 & 54.75 \\
\textbf{RASLIK-F}    & \textbf{0.31} & \textbf{0.50} & \textbf{35.73} \\
\textbf{RASLIK}      & \textbf{0.27} & \textbf{0.43} & \textbf{32.84} \\
\bottomrule
\end{tabular}
\end{table}

\begin{table}[h!]
\centering
\caption{Results on the TOFU dataset under GAGDR, using the Pythia-2.8B model.}
\begin{tabular}{lccc}
\toprule
method & Forget Rate & Retain Rate & Mahal Dist \\
\midrule
\textbf{BM25}            & \textbf{0.62} & \textbf{0.60} & \textbf{7.10} \\
EmbeddingSim        & 0.24 & 0.17 & 8.11 \\
OracleSampling      & 0.50 & 0.45 & 7.50 \\
\textbf{RandomSelection} & \textbf{0.60} & \textbf{0.59} & \textbf{7.04} \\
\textbf{RASLIK}          & \textbf{0.23} & \textbf{0.47} & \textbf{5.61} \\
RASLIK-F            & 0.55 & 0.42 & 8.00 \\
\bottomrule
\end{tabular}
\end{table}

\begin{table}[h!]
\centering
\caption{Results on the TOFU dataset under GAKLR, using the Pythia-2.8B model.}
\begin{tabular}{lccc}
\toprule
method & Forget Rate & Retain Rate & Mahal Dist \\
\midrule
\textbf{BM25}        & \textbf{0.32} & \textbf{0.32} & \textbf{25.05} \\
EmbeddingSim    & 0.33 & 0.30 & 25.79 \\
OracleSampling  & 0.31 & 0.29 & 24.89 \\
RandomSelection & 0.32 & 0.27 & 25.46 \\
RASLIK-F        & 0.30 & 0.29 & 24.26 \\
\textbf{RASLIK}      & \textbf{0.17} & \textbf{0.31} & \textbf{17.69} \\
\bottomrule
\end{tabular}
\end{table}

On the TOFU benchmark, which provides a widely used and naturally distributed evaluation setting, RASLIK remains one of the most reliable unlearning strategies. Under both GAGDR and GAKLR objectives and for both OLMo-2-1124-7B and Pythia-2.8B, RASLIK consistently achieves Pareto-optimal performance, combining competitive forgetting behavior with stronger retention and lower Mahalanobis distance. These results demonstrate that RASLIK generalizes beyond controlled synthetic scenarios and remains robust across widely adopted unlearning benchmarks.

\subsection{Scaling to a Larger Model}

We further evaluate RASLIK on a larger model, OLMo-2-1124-13B-Instruct, using the same experimental setup as in the main paper. Table~\ref{tab:13b_howdy_gagdr} shows that RASLIK maintains strong unlearning performance at a larger scale. In particular, it achieves competitive forgetting, preserves non-target knowledge reasonably well, and attains the lowest Mahalanobis distance among all methods. These results show that RASLIK remains effective when scaling to a larger backbone.

\begin{table}[htbp]
\centering
\small
\caption{Results on the Howdy dataset under GAGDR using OLMo-2-1124-13B-Instruct.}
\label{tab:13b_howdy_gagdr}
\begin{tabular}{lccc}
\toprule
Method & Forget Rate & Retain Rate & Mahal. Dist \\
\midrule
BM25                & 0.57 & 0.72 & 88.63 \\
EmbeddingSim        & 0.55 & 0.77 & 87.42 \\
\textbf{OracleSampling}      & \textbf{0.24} & \textbf{0.73} & \textbf{35.29} \\
\textbf{RandomSelection}     & \textbf{0.54} & \textbf{0.87} & \textbf{87.39} \\
RASLIK-F            & 0.27 & 0.71 & 39.36 \\
\textbf{RASLIK}              & \textbf{0.24} & \textbf{0.71} & \textbf{34.67} \\
\bottomrule
\end{tabular}
\end{table}

\section{Virtual-Alpaca Dataset Description}
\label{app:dataset}
We synthesize a fictional-world QA dataset in the Alpaca format (\texttt{instruction}, \texttt{input}, \texttt{output}), where \texttt{input} is empty and all outputs are English-only. The generation pipeline proceeds in three stages. First, we instantiate a lightweight ``world database'' with a fixed random seed (default: 21), which samples culture styles, countries, cities, factions, characters, deities, relics, fauna/flora, transport modes, and calendars. Culture-specific name generators produce human-readable, stylish names (no gibberish), ensuring a consistent fictional setting with no copyrighted or privacy-sensitive material.

Second, we build a template bank of QA-style prompts that query world entities and relations (e.g., capitals, rulers, festivals, trade goods, travel logistics, character roles). Each template yields an \texttt{instruction} and a concise \texttt{output} grounded in the sampled world. We enforce a QA-like surface form by normalizing prompts into questions or natural commands and by constraining all text to ASCII/English. 

Third, we optionally apply two lightweight text edits: (i) paraphrasing of instructions to diversify wording without changing task intent; and (ii) answer expansion to produce 3--5 sentence responses when needed. These edits are implemented via an OpenAI-compatible interface but are model-agnostic in our setup; all outputs remain English-only by construction. Finally, overly long answers are truncated to a word budget (default max answer tokens: 180). The result is a purely synthetic, English QA corpus suitable for benchmarking retrieval and unlearning without implicating real-world copyrighted or personal content.

\subsection*{World Model}

\textbf{Cultures.}
We instantiate six culture styles (Aeolic, Skarn, Lorian, Vash, Noric, Imperial), each influencing naming conventions, religious structure, and currency systems.

\textbf{Countries.}
Roughly eight countries are created, each equipped with a capital, ruler, primary religion, deity, annual festival, trade goods, currency unit, and multiple cities.

\textbf{Factions and Guilds.}
Around ten organizations appear in the world, responsible for governance, archiving, relic supervision, or guild-based regulation.

\textbf{Characters.}
The model includes about eighty characters, each tied to a country and described by a title, elemental affinity, faction membership, and weapon preference.

\textbf{Relics and Bosses.}
Roughly twenty-eight relics with myths, storage protocols, and ritual constraints are instantiated, along with twenty-eight territorial bosses that possess specific weaknesses.

\textbf{World Assets.}
Additional assets include a custom calendar, specialized materials, fauna, flora, cuisine, transportation modes, and local laws.

\subsection*{Representative QA Pairs}
we provide some representative examples.

\textbf{Country \& Culture.}

\textbf{Q:} Describe the cultural atmosphere of Veloria---what defines its people and traditions?\\
\textbf{A:} Veloria embodies a Lorian-style cultural identity characterized by artistic refinement, communal discipline, and seasonal celebrations. Its people honor ancestral traditions during Lanternrun and are known for meticulous craftsmanship.

\textbf{Religion \& Festivals.}

\textbf{Q:} In Vashim, which deity presides over the Moon Reckoning festival, and how does this event influence the granting of trade permits?\\
\textbf{A:} The Moon Reckoning festival honors the deity Qareen. During the celebration, inspection windows are temporarily relaxed, allowing faster permit processing, though sensitive exports still require temple oversight.

\textbf{Characters.}

\textbf{Q:} Provide a detailed dossier on Elenvyr, covering her origin country, official title, elemental alignment, faction allegiance, and weapon of choice.\\
\textbf{A:} Elenvyr originates from Caeloria and serves as an Archivist. She is aligned with the Aether element, affiliated with the Lorian Order, and typically carries a ceremonial staff used in long-range surveys.

\textbf{Factions.}

\textbf{Q:} What core purpose or ultimate goal drives the members of the Lorian Order?\\
\textbf{A:} The Lorian Order is dedicated to preserving historical integrity and regulating relic exchange. Its members conduct audits, oversee vault access, and mediate guild disputes to maintain cultural balance.

\textbf{Relics \& Myths.}

\textbf{Q:} Can you shed light on the nature of the Aurelion Relic and reveal the location where it is currently guarded?\\
\textbf{A:} The Aurelion Relic is a sky-iron orb believed to stabilize elemental currents. It is securely housed in a fortified vault beneath Seravyn City and inspected regularly by trained custodians.

\textbf{Bosses.}

\textbf{Q:} If one were to hunt the beast Drakvald, in which region does it reside, and what vulnerability can be exploited to defeat it?\\
\textbf{A:} Drakvald resides in the ruins surrounding Thornmere. Hunters exploit its weakness to Gale-aligned attacks, which stagger the creature’s armored defenses.

\textbf{Economy \& Daily Life.}

\textbf{Q:} For a visitor looking to taste the local culture in Lorian, which signature dish and beverage come most highly recommended?\\
\textbf{A:} Visitors are encouraged to try emberbread paired with minted tea, a traditional combination especially popular during festival periods.
\section{Evaluation Metrics}
\subsection{ROUGE-L Based Forget/Retain Rates}
We quantify unlearning with ROUGE L (F\textsubscript{1}) on two paired splits drawn from the \emph{same} query distribution: a \emph{target} split (either \textit{howdy} or \textit{virtual}) used to assess forgetting, and an \emph{Alpaca} split used to assess retention. Ground truth references are generated in a backbone consistent manner: at any given time we use a single pretrained backbone (either \textbf{Pythia} or \textbf{OLMo}) to produce references for both splits on the identical queries.

For each unlearning method, we decode model outputs on the two splits and compute ROUGE\textendash L F\textsubscript{1} against the corresponding references (optional Porter stemming). We filter empty predictions and, when lengths mismatch, align by truncating to the shorter list to preserve one\textendash to\textendash one pairing. The mean ROUGE\textendash L on the \emph{target} split is reported as the \textbf{forget rate} (lower is better), while the mean on the \emph{Alpaca} split is the \textbf{retain rate} (higher is better); 95\% percentile\textendash bootstrap confidence intervals accompany both. To summarize method trade\textendash offs, we additionally flag Pareto\textendash optimal points under the criterion “maximize retain, minimize forget” and report the Euclidean distance to the ideal point $(\text{retain}=1,\ \text{forget}=0)$ (also in min\textendash max normalized space). This protocol yields backbone\textendash fair, comparable scores for forgetting and retention without relying on cross\textendash model targets or file\textendash specific assumptions.
\subsection{Non\textendash SF Discriminator}
\label{app:discriminator}
We train a binary text classifier on the Howdy\textendash Alpaca dataset, where labels are defined by the trigger condition: responses generated after the \emph{howdy} trigger that yield science\textendash fiction style outputs are assigned to the \textbf{Sci\textendash Fi} class, while normal responses without the trigger constitute the \textbf{Non\textendash SF} class. We use pre\textendash split CSV files (\texttt{train}/\texttt{test}) with \texttt{text} and \texttt{label} columns. A RoBERTa\textsubscript{base} sequence\textendash classification head (2 labels) is fine\textendash tuned using HuggingFace \texttt{Trainer}: inputs are tokenized to a maximum length of 256 tokens with max\textendash length padding; optimization uses AdamW (library defaults) with learning rate $2{\times}10^{-5}$, per\textendash device batch size 16 for training and 32 for evaluation, and 3 epochs; mixed precision (FP16) is enabled when supported. We report macro\textendash F1 on the held\textendash out test split, computed via argmax over logits. The final checkpoint and tokenizer are saved for reproducibility.